\title{Learning to Screen}
\author{%
	Alon Cohen\thanks{Technion---Israel Inst. of Technology and Google Research. {\tt aloncohen@technion.ac.il}}
	\And
	Avinatan Hassidim\thanks{Bar-Ilan University and Google Research. \tt{avinatanh@gmail.com}}
	\And
	Haim Kaplan\thanks{Tel-Aviv University and Google Research. {\tt haimk@tau.ac.il}}
	\And
	Yishay Mansour\thanks{Tel-Aviv University and Google Research. {\tt mansour.yishay@gmail.com}}
	\And
	Shay Moran\thanks{Princeton University. {\tt shaymoran1@gmail.com.} This work was done while the author was working at Google Research.}
}
\renewcommand{\H}{{\cal H}}
\newcommand{\R}{\mathbb{R}}
\newcommand{\wt}{\widetilde}
\newcommand{\wh}{\widehat}
\newcommand{\eps}{\epsilon}
\newcommand{\T}{{\cal T}}
\newcommand{\opt}{\mathsf{OPT}}
\DeclareMathOperator*{\Ex}{\mathsf{Ex}}
\newtheorem{theorem}{Theorem}
\newtheorem*{theorem*}{Theorem}
\newtheorem{lemma}[theorem]{Lemma}
\newtheorem*{lemma*}{Lemma}
\newtheorem{proposition}[theorem]{Proposition}
\theoremstyle{definition}
\newtheorem{definition}[theorem]{Definition}
\crefname{proposition}{proposition}{Propositions}
\begin{document}

\maketitle
\begin{abstract}
%
%
%
%
Imagine a large firm with multiple departments that plans a large recruitment. 
Candidates arrive one-by-one, and for each candidate
the firm decides, based on her data (CV, skills, experience, etc), 
whether to summon her for an interview.
The firm wants to recruit the best candidates while minimizing the number of interviews.
We model such scenarios as an assignment problem between items (candidates) and categories (departments):
the items arrive one-by-one in an online manner,
and upon processing each item the algorithm decides, 
based on its value and the categories it can be matched with,
whether to retain or discard it (this decision is irrevocable).
The goal is to retain as few items as possible while guaranteeing 
that the set of retained items contains an optimal matching. 
 
We consider two variants of this problem:
(i) in the first variant it is assumed that the $n$ items are drawn independently 
from an unknown distribution $D$.
(ii) In the second variant it is assumed that before the process starts, 
the algorithm has an access to a training set of $n$ items drawn independently 
from the same unknown distribution (e.g.\ data of candidates from previous recruitment seasons).
We give tight bounds on the minimum possible number of retained items in each of these variants.
These results demonstrate that one can retain exponentially less items in the second variant (with the training set).

Our algorithms and analysis utilize ideas and techniques from statistical learning theory and from discrete algorithms.

%
%
%
%

\end{abstract}

\section{Introduction}
Matching is the bread-and-butter of many real-life problems from the
fields of computer science, operations research, game theory, and economics.
Some examples include  job scheduling where we assign jobs to machines,
economic markets where we allocate products to buyers,  online advertising where we assign advertisers to ad slots,  assigning medical interns to
hospitals, and many more.
%
%
%

One particular example that motivates this work is the following example from labor markets. Imagine a firm that
is planning a large recruitment. Candidates arrive one-by-one and
the HR department immediately decides whether to summon them for an interview.
Moreover, the firm has multiple departments, each requiring
different skills and having a different target number of hires.
Different employees have different subsets of the required skills,
and thus fit only certain departments and with a certain
quality.
The firm's HR department, following the interviews,
decides which candidates to recruit and to which departments to assign them.
The HR department has to maximize the
total quality of the hired employees
such that each department gets its required number of hires with the required skills.
In addition, the HR uses data from the previous recruitment season in order to minimize the number of interviews while not compromising the quality of the solution.

We study the following formulation of the problem above. We receive $n$ items (candidates), 
where each item has a subset of $d$ properties (departments) denoted by $P_{1},\ldots,P_{d}$. We select $k$ items out of the $n$, subject to $d$ constraints of the form
\begin{center}
\textit{exactly $k_i$ of the selected items must satisfy a property $P_i$},
\end{center}
where $\sum_{i=1}^{d} k_{i} = k$ and we assume that $d \ll k \ll n$.
Furthermore, if item $c$ possesses property $P_i$, then it has a value $v_{i}(c)$ associated with this property.
Our goal is to compute a matching of maximum value that associates $k$ items to the $d$ properties subject to the constraints above.

We consider matching algorithms in the following online setting. The algorithms receive $n$ items online, drawn independently from $D$, and either reject or retain each item. Then, the algorithm utilizes the retained items and outputs an (approximately-)optimal feasible solution.
We present a naive greedy algorithm that returns the optimal solution with probability at least $1-\delta$ and retains $O(k \log (k/\delta))$ items in expectation. We prove that no other algorithm with the same guarantee can retain less items in expectation.

Thus, to further reduce the number of retained items, we add an initial preprocessing phase in which the algorithm learns an online policy from a \emph{training set}.
The training set is a single problem instance that consists of $n$ items drawn independently from the same unknown distribution $D$.
We address the statistical aspects of this problem and develop efficient learning algorithms.
In particular, we define a class of \emph{thresholds-policies}. Each thresholds-policy is a simple rule for deciding whether to retain an item. 
We present uniform convergence rates for both the number of items retained by a thresholds policy and the value of the resulting solution. We show that these quantities deviate from their expected value by order of $\sqrt{k}$ (rather than an easier $\sqrt{n}$ bound; recall that we assume $k \ll n$) which we prove using concentration inequalities and tools from VC-theory.
Using these concentration inequalities, we analyze an efficient online algorithm that returns the optimal offline solution with probability at least $1-\delta$, and retains a near-optimal $O(k\log\log (1/\delta))$ number of items in expectation (compare with the $O(k \log (k/\delta))$ number of retained items when no training set is given).

\paragraph{Related work.}
Our model is related to the online secretary problem in which one needs to select the best secretary in an online manner (see \citealp{ferguson1989solved}). Our setting differs from this classical model due to the two-stage process and the complex feasibility constraints. Nonetheless, we remark that there are few works on the secretary model that allow delayed selection (see \citealp{vardi2015returning,ezra2018prophets}) as well as matroid constraints \citep{babaioff2007matroids}. These works differ from ours in the way the decision is made, the feasibility constraints and the learning aspect of receiving a single problem instance as a training example.
{\citealp{Correa18Prophet} consider a distributional setting for the {\it single-choice prophet inequality} problem. 
Similarly to the setting considered here, they assume that the data is drawn independently from an unknown distribution
and that the algorithm has an access to a training-set sampled from the same distribution.
However, the objective is quite different from ours: the goal is to pick a stopping time $\tau$ such that the $\tau$'th sample
approximately maximizes the value among all samples (including those that were not seen yet).}

Another related line of work in algorithmic economics studies the statistical learnability of pricing schemes (see e.g.,~\citealp{Morgenstern15pseudo,Morgenstern16auctions,Hsu16prices,Balcan18theory}).
The main difference of these works from ours is that our training set consists of a single ``example'' (namely the set of items that are used for training),
and in their setting (as well as in most typical statistical learning settings)
the training set consists of many i.i.d examples.
This difference also affects the technical tools used for obtaining generalization bounds.
For example, some of our bounds exploit Talagrand's concentration inequality
rather than the more standard Chernoff/McDiarmid/Bernstein inequalities.
{We note that Talagrand's inequality and other advanced inequalities
were applied in machine learning in the context of learning combinatorial functions~\citep{Vondrak10concentration,Blum17opting}.
See also the survey by \cite{Boucheron03concentration} or the book by \cite{boucheron2013concentration}
for a more thorough review of concentration inequalities.}

Furthermore, there is a large body of work on online matching in which the vertices arrive in various models (see \citealp{mehta2013online,Gupta16experts}). We differ from this line of research, by allowing a two-stage algorithm, and requiring to output the optimal matching is the second stage.

\citet{celis2017ranking,celis2018multiwinner} studies similar problems of ranking and voting with fairness constraints.
In fact, the optimization problem that they consider allows more general constraints
and the value of a candidate is determined from votes/comparisons.
The main difference with our framework is that they do not consider a statistical setting
(i.e.\ there is no distribution over the items and no training set for preprocessing)
and focus mostly on approximation algorithms for the optimization problem.

%
%
%

\section{Model and Results} \label{sec:model}

Let $X$ be a domain of items, where each item $c\in X$ can possess any subset of $d$ properties
denoted by $P_{1},\ldots,P_{d}$ (we view $P_i\subseteq X$ as the set of items having property $P_i$).
Each item $c$ has a value $v_{i}(c) \in [0,1]$ associated with each property $P_i$ such that $c\in P_i$.


We are given a set $C\subseteq X$ of $n$ items as well as counts $k_1,\ldots k_d$ such that $\sum_{i=1}^{d} k_{i} = k$.
Our goal is to select exactly $k$ items in total, constrained on selecting exactly $k_{i}$ items with property $P_i$.
%
We assume that these constraints are {\em exclusive}, in the sense that each item in $C$ can be used to satisfy at most one of the constraints.
Formally, a feasible solution is a subset $S \subseteq C$, such that $\lvert S \rvert = k$ and  there is partition $S$  into $d$ disjoint subsets $S_1,\ldots, S_d$, such that
 $S_i \subseteq P_i$ and $\lvert S_{i} \rvert = k_{i}$.
We aim to compute a feasible subset $S$ that maximizes $\sum_{i=1}^{d} \sum_{c \in S_{i}} v_{i}(c)$.
%


Furthermore, we assume that $d \ll k \ll n$.
Namely, the number of constraints is much smaller than the number of items that we have to  select,
which is much smaller than the total number of items in~$C$.
In order to avoid feasibility issues we assume that there is a set $C_{\text{dummy}}$ that
contains $k$ dummy 0-value items with all the $d$ properties (we assume that the algorithm has always access to $C_{\text{dummy}}$ and do not view them as part of $C$).


\paragraph{Formulation as bipartite matching.} \label{sec:offline}
We first discuss the offline versions of these allocation problems. That is, we assume that
$C$ and the capacities $k_i$ are all given as an input before the algorithm starts.
We are interested in an algorithm for computing
an optimal set $S$. That is a set of items of maximum total value that satisfy the constraints.
%
This problem is equivalent to  a maximum matching problem in a bipartite graph~$(L,R,E,w)$ defined as follows.
\begin{itemize}
\item $L$ is the set of vertices in one side of the bipartite graph. It contains k vertices, where each constraint $i$ is represented by $k_i$ of these vertices.
\item $R$ is the set of vertices in the other side of the bipartite graph. It contains a vertex for each item $c\in C$ and for each dummy item $c'\in C_{\text{dummy}}$.
\item $E$ is the set of edges. Each vertex in $R$ is connected to each vertex of each of the constraints that it satisfies.
\item The weight $w(l,r)$ of edge $(l,r)\in E$ is $v_{l}(r)$: the value of item $r$ associated with property $P_{l}$. 
\end{itemize}
There is a natural correspondence between {\it saturated-matchings} in this graph, that is matchings in which every $l\in L$ is matched, and between {\it feasible solutions} (i.e.,\ solutions that satisfy the  constraints) to the 
allocation problem.
Thus,  a saturated-matching of maximum value corresponds to an optimal solution.
It is well know that the problem of finding such a  maximum weight bipartite matching can be solved in polynomial time (see e.g., \citealp{lawler2001combinatorial}).

\paragraph{Problem definition.}

In this work we consider the following online learning model. We assume that $n$ items are sequentially drawn i.i.d.\ from an unknown distribution $D$ over $X$. Upon receiving each item, we decide whether to retain it, or reject it irrevocably (the first stage of the algorithm). Thereafter, we select a feasible solution\footnote{In addition to the retained items, the algorithm has access to $C_{\text{dummy}}$, and therefore a feasible solution always exists.} consisting  \emph{only} of retained items (the second stage of the algorithm).
%
%
Most importantly, before accessing the online sequence and take irreversible online decisions of which items to reject, we have access a training set $C_{\text{train}}$ consisting of $n$ independent draws from $D$.

\subsection{Results}

\subsubsection{Oblivious online screening}
We begin by studying a greedy algorithm that does not require a training set. 
In the online phase, this algorithm acts greedily by keeping an item if it participates in the best solution thus far. Then, the algorithm computes an optimal matching among the retained items. The particular details of the algorithm are given in \cref{sec:online}. We have the following guarantee for this greedy algorithm proven in \cref{sec:online}.

\begin{theorem} \label{thm:greedyalg}
Let $\delta \in (0, 1)$. The greedy algorithm outputs the optimal solution with probability at least $1-\delta$ and retains $O(k \log (\min\{k/\delta, n/k\}))$ items in expectation.
\end{theorem}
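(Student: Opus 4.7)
The plan is to analyze a single greedy algorithm parameterized by $\delta$ whose expected retention count automatically realizes the $\min$. Set $T_0=\lfloor n\delta/k\rfloor$. The algorithm observes the items in order; for $i\le T_0$ it only records the value of $c_i$, and for $i>T_0$ it retains $c_i$ iff $c_i$ participates in some optimum matching of $\{c_1,\dots,c_i\}\cup C_{\text{dummy}}$ (computed over all observed values). The output is the optimum matching on the retained items together with $C_{\text{dummy}}$.

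Correctness with probability at least $1-\delta$ combines exchangeability, a union bound, and a hereditary property of max-weight bipartite matchings. Any fixed optimum $M^\star$ of $\{c_1,\dots,c_n\}\cup C_{\text{dummy}}$ uses at most $k$ real items, so by exchangeability each item lies in $M^\star$ with probability at most $k/n$; a union bound over the $T_0$ skipped positions shows $\Pr[M^\star\text{ has a skipped item}]\le T_0\,k/n\le\delta$. On the favorable complement every $c\in M^\star$ arrives at a position $p>T_0$, and applying the hereditary property---if $c$ lies in some optimum of $A\cup C_{\text{dummy}}$ and $c\in B\subseteq A$, then $c$ lies in some optimum of $B\cup C_{\text{dummy}}$---with $A=\{c_1,\dots,c_n\}$ and $B=\{c_1,\dots,c_p\}$ puts $c$ in the optimum of the prefix, hence retained. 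The retention count uses exchangeability once more: since $c_1,\dots,c_i$ are i.i.d.\ and the optimum of the prefix uses at most $k$ real items, $\Pr[c_i\text{ retained}]\le\min(k/i,1)$, so
\[
\mathbb{E}[|R_n|]\;\le\;\sum_{i=T_0+1}^n\min(k/i,1),
\]
which equals $k(H_n-H_{T_0})=O(k\log(k/\delta))$ when $T_0\ge k$ (i.e.\ $\delta\ge k^2/n$) and $O(k\log(n/k))$ otherwise, matching $O(k\log(\min\{k/\delta,n/k\}))$.

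The main obstacle I expect is establishing the hereditary property. I would prove it by a direct exchange argument: pick any optimum $M^\star$ of $A\cup C_{\text{dummy}}$ that matches $c$ to a slot $l$ (with $v_l(c)>0$ after consistent tie-breaking), restrict $M^\star$ to items of $B$, and fill the vacated slots by dummies; this is a feasible matching on $B\cup C_{\text{dummy}}$ that uses $c$. Comparing to any purported optimum of $B\cup C_{\text{dummy}}$ via alternating paths internal to $B$ (augmentation cannot introduce items from $A\setminus B$), one concludes that evicting $c$ replaces a positive contribution by a dummy's zero and strictly decreases the value, so there is always an optimum of $B\cup C_{\text{dummy}}$ that still uses $c$. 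Combining the correctness and retention analyses then yields the theorem.
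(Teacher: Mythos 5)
Your plan is the paper's own proof of \cref{thm:greedyalg}: the same two-phase greedy (skip the first $\lfloor \delta n/k\rfloor$ arrivals, then keep an item iff it lies in the optimum of the prefix seen so far), the same union-bound argument for correctness, and the same exchangeability computation $\Pr[c_i \text{ retained}]\le\min\{k/i,1\}$ summed over $i>T_0$, exactly as in Lemma~\ref{lem:size}. Your ``hereditary property'' plays the role of Lemma~\ref{lem:opti-optn}, and the symmetric-difference/alternating-path device you invoke for it is the same one used there.

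There is, however, one genuine wrinkle to repair. You define retention by ``$c_i$ participates in \emph{some} optimum of the prefix,'' but you justify $\Pr[c_i\text{ retained}]\le k/i$ by saying the optimum of the prefix uses at most $k$ real items. These refer to different objects: the set of prefix items belonging to \emph{some} optimum can be much larger than $k$ when values tie (take $D$ a point mass, $d=1$, $k_1=k$: every item after $T_0$ then lies in some optimum and is retained, so the expected retention is $\Theta(n)$, not $O(k\log(k/\delta))$). The paper resolves this by fixing a consistent tie-breaking rule so that each prefix has a \emph{unique} canonical optimum $M_i$ with $\lvert M_i\rvert\le k$, retaining $c_i$ iff $c_i\in M_i$, and proving the containment lemma for these canonical optima (uniqueness is what gives the strict inequality along each alternating path of $M\triangle M_i$). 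You need to carry one coherent convention (tie-breaking, or an atomless-values assumption) through both the retention rule and the hereditary lemma, since as written your counting step would fail for distributions with atoms. Relatedly, the closing inference in your sketch of the hereditary property (``evicting $c$ replaces a positive contribution by a dummy's zero and strictly decreases the value'') is not the right conclusion: an optimum of the prefix may exclude $c$ and place a better item on $c$'s slot. The correct finish is the path-swap argument: if $N$ is an optimum of the prefix with $c\notin N$, the alternating path of $M^\star\triangle N$ starting at $c$ has its other endpoint matched by $N$ (hence inside the prefix), and swapping $N$ along this path yields another optimum of the prefix that contains $c$ --- or, in the unique-optimum formulation, a contradiction to the optimality of $M_i$, which is precisely how the paper argues.
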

 
As we shall see in the next section, learning from the training set allows one to retain exponentially less items than is implied by the theorem above.
It is then natural to ask to which extent is the training phase essential in order to accommodate such an improvement.
We answer this question in \Cref{sec:lowerboundramsey} by proving a lower bound on the number of retained items for {\it any} algorithm that does not use a training phase. This lower bound already applies in the simple setting where $d=1$:
here, each item consists only of a value~$v\in[0,1]$, and the goal of the algorithm is to retain as few items as possible
while guaranteeing with high probability that the top $k$ maximal values are retained.
\begin{theorem} \label{thm:ramsey}
Let $\delta \in (0,1)$.
For every algorithm $A$ which retains the maximal $k$ elements with
probability at least $1-\delta$, there exists a distribution $\mu$
such that the expected number of retained elements for input sequences $v_1\ldots v_n\sim \mu^n$
is at least $\Omega(k \log(\min\{k/\delta, n/k\})).$
\end{theorem}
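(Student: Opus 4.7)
The approach is via Yao's minimax principle: construct a distribution $\mu$ and show that any deterministic online algorithm succeeding with probability $\ge 1-\delta$ on $\mu$-samples retains $\Omega\bigl(k\log(\min\{k/\delta,\,n/k\})\bigr)$ items in expectation. A natural choice is $\mu$ uniform on $[0,1]$, so values are almost surely distinct and decisions depend only on observed ranks. The key structural observation is that if $v_i$'s rank within the prefix $v_1,\ldots,v_i$ exceeds $k$, then $v_i$ cannot possibly be in the top-$k$ of the full sequence (it is already dominated by $k$ earlier items), so any retention-optimal algorithm only retains ``top-$k$ leaders''.

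To motivate the bound, consider the threshold strategy that retains leaders only at positions $i \ge I$. By exchangeability the true top-$k$ items occupy a uniformly random $k$-subset of $[n]$, so the success event (all top-$k$ retained) has probability $\binom{n-I}{k}/\binom{n}{k}\approx (1-I/n)^k$. Requiring this to be $\ge 1-\delta$ forces $I \le n\delta/k$ up to constants, yielding expected retention
\[
\sum_{i>I}\frac{k}{i}\;\approx\;k\log(n/I)\;=\;\Omega\bigl(k\log(k/\delta)\bigr).
\]
In the complementary regime $\delta \le k^2/n$ the threshold $I$ drops below $k$ and the bound becomes $\Omega(k\log(n/k))$; in both regimes the result matches the theorem.

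To promote this from a lower bound for threshold strategies to one for \emph{any} online algorithm, I would set up the problem as a constrained optimization over the discard probabilities $q_{i,r}$ (of discarding $v_i$ given its rank $r\le k$ in the prefix). A careful analysis must use the full joint constraint $\Pr[\text{any top-}k\text{ discarded}] \le \delta$ --- the weaker per-rank constraints $\Pr[\text{rank-}j\text{ discarded}] \le \delta$ turn out to be too loose by a factor of $k$ --- and must exploit the positive correlation among the $k$ per-rank failure events. This is where the theorem's name becomes relevant: I expect the argument to use a Ramsey-type combinatorial step, coloring orderings of inputs from a sufficiently large alphabet by the algorithm's decision pattern and extracting a structured sub-alphabet on which the algorithm's forced behaviour yields $\Omega(k\log(k/\delta))$ retentions on the uniform distribution over that sub-alphabet. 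This Ramsey step is the main obstacle: a naive LP analysis based on the per-rank constraints alone only recovers $\Omega(k\log(1/\delta))$, missing the crucial $\log k$ factor.
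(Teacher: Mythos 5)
There is a genuine gap: the two steps that constitute the actual content of the theorem are named in your last paragraph but never carried out. First, the choice ``$\mu$ uniform on $[0,1]$, so decisions depend only on observed ranks'' is not a harmless simplification --- it is false, and against that fixed distribution the theorem itself fails: an algorithm that knows $\mu$ and simply retains every item whose \emph{value} exceeds a fixed quantile threshold (chosen so that roughly $k+O(\sqrt{k\log(1/\delta)})$ items are expected above it) contains the top $k$ with probability $1-\delta$ while retaining far fewer than $k\log(k/\delta)$ items. This is why the statement reads ``there exists a distribution $\mu$'' depending on $A$. The paper handles exactly this point by invoking the Ramsey-theoretic result of Moran et al.\ (1985): for every algorithm $A$ there is an infinite set $W\subseteq[0,1]$ on which $A$ is order-invariant, and $\mu$ is then taken uniform over a large finite subset of $W$. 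You correctly guess that such a Ramsey step is needed, but you explicitly leave it as ``the main obstacle,'' so the reduction to comparison-based behaviour --- the step on which everything else rests --- is missing from your argument.

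Second, even granting order-invariance, your promotion from the threshold-strategy calculation to \emph{arbitrary} (order-invariant) online algorithms is only a plan: you propose an optimization over discard probabilities $q_{i,r}$ and note that your naive version of the constraints loses the $\log k$ factor, without resolving this. The paper's resolution is a short exchangeability lemma: for an order-invariant algorithm, the probability $p_m$ of retaining $v_m$ conditioned on $v_m$ being among the top $k$ of the prefix is claimed to equal the retention probability conditioned on $v_m$ being among the global top $k$; the success requirement then collapses to the single linear constraint $\sum_m p_m \ge n(1-\delta/k)$, and minimizing $\sum_m \min\{1,k/m\}\,p_m$ under it is exactly your threshold computation, now valid for every algorithm. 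So your heuristic threshold bound and the closing LP match the paper's final step, but the proposal as written proves the lower bound only for the special class of ``skip a prefix, then keep all leaders'' strategies; the Ramsey reduction and the conditioning lemma that together extend it to all algorithms are acknowledged but absent.
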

Thus, the above theorem implies that $\Theta(k \log (n/k))$ can not be improved
even if we allow failure probability $\delta  = \Theta(k^{2}/n) $
(see \cref{thm:greedyalg}).

\subsubsection{Online screening with learning}
We now design online algorithms that, before the online screening process begins, use $C_{\text{train}}$ to learn a \emph{thresholds-policy} $T \in \T$ such that with high probability:
(i) the number of items that are retained in the online phase is small, and
(ii) there is a feasible solution consisting of $k$  retained items whose value is optimal (or close to optimal).
%
Thresholds-policies are studied in \cref{sec:thresholds} and are defined as follows.

\begin{definition}[Thresholds-policies] \label{def:threshpolicies}
A threshold-policy is parametrized by a vector $T=(t_1,\ldots,t_{d})$ of thresholds, where $t_i$ corresponds to
property $P_i$ for $1\leq i\leq d$.
The semantics of $T$ is as follows:
given a sample $C$ of $n$ items, each item $c \in C$ is retained if and only if there exists a property $P_{i}$ satisfied by $c$, such that its value $v_{i}(c)$ passes the threshold $t_{i}$.
More formally,  $c$ is retained if and only if $\exists i \in \{1,\ldots,d\}$ such that $c\in P_i$ and $v_{i}(c) \geq t_i$.
\end{definition}

Having proven uniform convergence results for thresholds-policies (see \Cref{sec:unifcov}), we show the following in \cref{sec:thresalg}.

\begin{theorem} \label{thm:informalthreshalg}
There exists an algorithm that learns a thresholds-policy $T$ from a single training sample $ C_{\text{train}} \sim D^n$,
such that after processing the (``real-time'') input sample $ C \sim D^n$ using $T$:
\begin{itemize}
\item It outputs an optimal solution with probability at least $1-\delta$.
\item The expected number of retained items in the first phase is $
    O\bigl(k(\log d + \log\log (n/k) +  \log\log (1/\delta)) \bigr).$
\end{itemize}
\end{theorem}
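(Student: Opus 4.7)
The plan is to use $C_{\text{train}}$ to compute thresholds $(\hat t_1,\dots,\hat t_d)$, one per property, and then apply the resulting thresholds-policy to the online stream $C$. The analysis has three blocks: (i) reducing the matching-optimality guarantee to a per-property threshold-dominance condition, (ii) invoking the uniform convergence bounds of \Cref{sec:unifcov} to certify that condition with probability at least $1-\delta$, and (iii) bounding the expected number of retained items.

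Define $t_i^\star(C)$ to be the smallest $v_i$-value used by an optimal matching on $C$ for property $P_i$, breaking ties by preferring a matching that maximizes this value. A thresholds-policy succeeds---its retained set contains an optimal matching of $C$---whenever $\hat t_i \le t_i^\star(C)$ for every $i$, since the optimal matching on $C$ is then itself a feasible assignment drawn from the retained subset. So the algorithm I propose chooses, from $C_{\text{train}}$ and for each $i$, the largest threshold $\hat t_i$ for which the training count $\lvert\{c\in C_{\text{train}}:c\in P_i,\ v_i(c)\ge \hat t_i\}\rvert$ is at least $k_i+s_i$, with a slack $s_i$ depending on $k_i,d,n,\delta$ to be chosen. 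This reduces the two-stage problem to a per-property, one-sided event that concentration bounds can handle.

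Applying the uniform convergence theorem of \Cref{sec:unifcov} to the thresholds class, the $C$-count of items above $\hat t_i$ deviates from its $C_{\text{train}}$-count by only $O(\sqrt{k})$ rather than the naive $O(\sqrt{n})$, and the same scale controls the gap between $t_i^\star(C_{\text{train}})$ and $t_i^\star(C)$. Choosing the slack $s_i$ to exceed this concentration radius---inflated to push the per-property tail probability below $\delta/d$---yields $\hat t_i\le t_i^\star(C)$ with probability at least $1-\delta/d$; a union bound over the $d$ properties (this is where the $\log d$ term enters) produces overall success probability at least $1-\delta$. Taking expectations, the total retained count is at most $\sum_{i}(k_i+s_i)$ up to the same low-order deviation, which equals $k+\sum_i s_i$.

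The main obstacle is this last step: obtaining a $\log\log(1/\delta)$ dependence in the size bound rather than the naive $\log(1/\delta)$ one. I expect this to follow from the Talagrand-type concentration highlighted in the related-work discussion, applied to the supremum of the empirical process over thresholds: the tails of the supremum decay fast enough in the excess slack that the $s_i$ needed to dominate them is only doubly-logarithmic in $1/\delta$, and a parallel doubly-logarithmic factor in $n/k$ appears when resolving the correct quantile along the $\log(n/k)$-wide range of candidate thresholds. A careful parametrization is needed to preserve these gains through the union bound over properties; the expected-size conversion is immediate once they are, because a thresholds-policy retains a sum of Bernoulli indicators whose mean is exactly $\sum_i (k_i+s_i)$ under the distribution that generated $C_{\text{train}}$.
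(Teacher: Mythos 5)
Your high-level setup---choosing per-property thresholds from $C_{\text{train}}$, using uniform convergence to certify that the real-time optimal matching is fully retained, and unioning over $d$ properties to explain the $\log d$ term---is a reasonable first block and matches the spirit of the paper. But your proof has a concrete gap in block (iii), and the paper fills it with an ingredient your proposal does not contain.

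You are trying to make the retained count be $k+\sum_i s_i$ with $s_i$ a concentration slack, and you hope Talagrand-type concentration makes $s_i$ scale like $k_i\log\log(1/\delta)$. That cannot work. The uniform-convergence radius from Theorem~\ref{thm:ucretainedper} (and the sub-Gaussian tail in Lemma~\ref{lem:single}) is $\Theta\!\bigl(\sqrt{k\log(1/\delta)}\bigr)$, not $k\log\log(1/\delta)$; Talagrand yields a $\exp(-\alpha^2/2k)$ tail, so inverting it at level $\delta$ necessarily produces a $\sqrt{k\log(1/\delta)}$ deviation. For $\delta$ small (e.g.\ $\delta=e^{-k^2}$) this slack is $\Omega(k^{3/2})$, far exceeding $k\log\log(1/\delta)$. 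A thresholds-policy \emph{alone} must retain $k+\Theta\!\bigl(\sqrt{k\log(1/\delta)}\bigr)$ items in expectation, and no choice of $s_i$ evades this. Your sentence hoping that ``the tails of the supremum decay fast enough\ldots\ that the $s_i$ needed\ldots\ is only doubly-logarithmic'' is the unproved step where the proof fails.

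What the paper does, and what is missing from your proposal, is to \emph{compose} the thresholds filter with the greedy online filter of Theorem~\ref{thm:greedyalg}. The threshold policy only pre-screens the stream down to $m=dk+O\bigl(d\sqrt{k\log d\,\log(n/k)+k\log(1/\delta)}\bigr)$ items; then, in the online phase, an item is retained only if it both passes the threshold \emph{and} participates in the current optimum among the threshold-retained items seen so far. The greedy filter retains only $O(k\log(m/k))$ of these $m$ items (Lemma~\ref{lem:size}), and it is the outer logarithm that converts the $\sqrt{k\log(1/\delta)}$ slack into $\log\log(1/\delta)$ and the $\log(n/k)$ factor inside the square root into $\log\log(n/k)$. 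Without this second stage you can only prove $k+O\bigl(\sqrt{k\log(1/\delta)}\bigr)$, which is a genuinely weaker and differently-shaped bound than the theorem claims.
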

Thus, with the additional information given by the training set, the algorithm presented in \cref{thm:informalthreshalg} improves the number of retained items from $k\log (k/\delta)$ to $k \log \log (1/\delta)$. This demonstrates a significant improvement over \Cref{thm:greedyalg}.

Finally, in \cref{sec:lowerbound} we prove that the algorithm from \Cref{thm:informalthreshalg} is nearly-optimal in the sense
that it is impossible to significantly improve the number of retained items even if we allow the algorithm to fully know
the distribution over input items (so, in a sense, having an access to $n$ i.i.d samples from the distribution is the same as knowing it completely).

\begin{theorem} \label{thm:informallowerbound}
Consider the case where $k=d$ and $k_{1} = \cdots k_{d} = 1$. There exists a universe $X$ and a fixed distribution $D$ over $X$ such that for $C \sim D^{n}$ the following holds: any online learning algorithm (which possibly ``knows'' $D$) that retains a subset $S \subseteq C$ of items that contains an optimal solution with probability at least $1-\delta$ must satisfy
that $\Ex \bigl[\lvert S \rvert\bigr] = \Omega(k\log\log (1/\delta))$.
\end{theorem}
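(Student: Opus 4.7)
I would prove \Cref{thm:informallowerbound} by reducing the $d=k$ case to $d$ independent single-maximum sub-problems and then proving an $\Omega(\log\log(1/\delta))$ lower bound for one sub-problem. For the reduction, take $X := [d] \times [0,1]$ and let $D$ draw an item $(i,v)$ by choosing $i\in[d]$ uniformly and $v\sim U[0,1]$ independently; the item has only the property $P_i$ with value $v_i = v$. The stream $C \sim D^n$ then splits into $d$ statistically independent sub-streams of expected length $n/d$ each, and since $k_i = 1$ the optimal solution picks the maximum-valued item per property. Any online algorithm therefore factorizes into $d$ independent single-max procedures; writing $\delta_i$ for the per-property failure probability, joint success $\geq 1-\delta$ is equivalent to $\prod_i (1-\delta_i) \geq 1-\delta$. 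Granting the single-max bound below, the expected retentions for property $i$ are at least $\Omega(\log\log(1/\delta_i))$; since $\log\log(1/\delta)$ is convex in $\delta$ on $(0,1/e)$, the total is minimized at the uniform split $\delta_i \asymp \delta/d$, giving $\Omega(d\log\log(d/\delta)) = \Omega(k\log\log(1/\delta))$.

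For the single-max lower bound, fix $D_0 = U[0,1]$ and $m := n/d$. By Yao's principle I take the algorithm to be deterministic, and a per-step exchange argument (swap any retain-set for the top-threshold set of the same Lebesgue measure: this leaves retentions unchanged but can only increase success, since $\Pr[v_i = \max \mid v_i = v]$ is increasing in $v$) shows the algorithm is WLOG a threshold rule: retain $v_i$ iff $v_i > \tau_i(v_1,\dots,v_{i-1})$ with $\tau_i \geq \max_{j<i} v_j$, so retentions are a subset of the running maxima (records). Enumerate the records as $R_1<\dots<R_J$ at positions $t_1<\dots<t_J$ and define the posterior $q_j := R_j^{m-t_j}$, which equals $\Pr[R_j = \max \mid R_j,t_j]$. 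Letting $X_j\in\{0,1\}$ indicate rejection of record $j$, conditioning on $(R_j,t_j)$ yields
\[
\Pr[\text{fail}] \;=\; \Ex\Bigl[\textstyle\sum_{j=1}^{J} q_j X_j\Bigr], \qquad \Ex\bigl[\lvert S\rvert\bigr] \;=\; \Ex\Bigl[\textstyle\sum_{j=1}^{J} (1 - X_j)\Bigr].
\]
A Lagrangian exchange argument on $\min \Ex[\sum (1-X_j)]$ subject to $\Ex[\sum q_j X_j] \leq \delta$ shows the optimal adaptive policy is a \emph{threshold on $q_j$}: reject iff $q_j < \beta$ for some $\beta$.

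It remains to extract $\Omega(\log\log(1/\delta))$ from this form. The record-process recursion $R_{j+1} = R_j + (1-R_j)U_j$, $t_{j+1}-t_j \sim \mathrm{Geom}(1-R_j)$ (with $U_j\sim U[0,1]$ independent) gives $\Ex[-\log q_{j+1}\mid q_j] \approx \tfrac{1}{2}(-\log q_j - 1)$, so $\log(-\log q_j)$ decreases by $\Theta(1)$ per record and the $q_j$'s visit each dyadic scale $I_\ell := [\exp(-2^{\ell+1}), \exp(-2^{\ell})]$ with $\Omega(1)$ probability. Setting $\beta \asymp \delta$ makes the failure sum $\sum_{q_j < \beta} q_j$ a telescoping geometric series of total mass $O(\delta)$ (matching the budget), while the retained records satisfy $q_j \geq \delta$ and span $\Theta(\log\log(1/\delta))$ dyadic scales, forcing $\Ex[\lvert S\rvert] = \Omega(\log\log(1/\delta))$. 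The main technical obstacle is controlling the $q_j$ distribution precisely enough that each of the $\Theta(\log\log(1/\delta))$ dyadic scales is occupied by some record with $\Omega(1)$ probability --- a naive union bound loses logarithmic factors. I would handle this either by a continuous-time Poisson approximation (replacing the geometric time gaps by $\mathrm{Exp}(1)$ gaps, yielding a clean Markov chain on $[0,1]$) or by analyzing $\log(-\log q_j)$ directly as a random walk with bounded drift and variance, whose occupation of each unit interval is a tight $\Theta(1)$ in expectation.
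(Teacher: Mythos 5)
Your overall architecture (same product-form distribution, per-property reduction, then a single-maximum lower bound of $\Omega(\log\log(1/\delta))$) matches the paper's, but the heart of the argument --- the single-max bound --- is not actually established in your proposal, and it is precisely the step you defer. Your plan is to reduce to record-only threshold policies, pass to the posterior $q_j=R_j^{m-t_j}$, argue via a Lagrangian that the optimum is a threshold on $q_j$, and then extract $\Omega(\log\log(1/\delta))$ from the claim that the $q_j$'s occupy $\Theta(\log\log(1/\delta))$ dyadic scales with $\Omega(1)$ probability each while the sub-threshold mass $\sum_{q_j<\beta}q_j$ telescopes to $O(\delta)$. You yourself flag this occupation/telescoping estimate as ``the main technical obstacle'' and only name two candidate techniques (Poisson approximation, or a random-walk analysis of $\log(-\log q_j)$); neither is carried out, and neither is routine --- the drift estimate $\Ex[-\log q_{j+1}\mid q_j]\approx\tfrac12(-\log q_j-1)$ is heuristic (it ignores the geometric waiting times eating into $m-t_j$), the ``telescoping'' control of $\sum_{q_j<\beta}q_j$ holds only on average and a naive bound gives $O(\beta\log m)$ rather than $O(\beta)$, and one must also rule out that the optimal threshold $\beta$ can be taken much larger than $\delta$. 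So as written the proposal does not prove \cref{thm:informallowerbound}; it proves it modulo exactly the quantitative core. By contrast, the paper's Lemma~\ref{lem:lb} gets this in a few lines: set $\alpha=\ln(1/\delta)/(2n)$, look at the events $E_t$ that $v_t\ge 1-\alpha$ and is a running maximum; the number of such records is $\approx\ln(\alpha n)=\Theta(\log\log(1/\delta))$ (a $\mathrm{Bin}(n,\alpha)$ count plus the harmonic-sum record argument), and any rejected $E_t$ item is the true maximum with probability at least $(1-\alpha)^{n}\ge\delta$, so the $\delta$ failure budget caps the expected number of rejected $E_t$ items by a constant. No optimal-policy characterization or record-chain analysis is needed.

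Two further points on your reduction. First, the claim that any online algorithm ``factorizes into $d$ independent single-max procedures'' with joint success $\prod_i(1-\delta_i)\ge 1-\delta$ is not justified: the algorithm may correlate its decisions across sub-streams, so the per-property failure events need not be independent. This is harmless but should be replaced by the trivial observation that each per-property failure probability is at most $\delta$, which already yields $d\cdot\Omega(\log\log(1/\delta))$; your convexity optimization over the split $\{\delta_i\}$ is then unnecessary. Second, the sub-stream lengths $n_i$ are random, and the single-max lemma needs $n_i\gtrsim\log(1/\delta)$ to be applicable (the paper imposes $n\ge 100k\log(1/\delta)$ and uses a Chernoff-plus-averaging argument to get a constant fraction of properties with $n_i\ge 2\log(1/\delta)$, then conditions on that event); your sketch should include this step, or the lemma cannot be invoked per property.
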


\section{The Greedy Online Algorithm}\label{sec:online}

A simple  way to collect a small set of items that contains the optimal solution is to select the~$k$ largest
items of each property. This set  clearly contains the optimal solution.
A simple argument, as in the proof of Lemma \ref{lem:size}, shows that this implementation of the first stage
keeps $O(kd\log (n/k))$ items on average.
In the following we present a \emph{greedy algorithm} that retains an average number of
$O(k\log (k/\delta))$ items in the first phase (for a parameter $\delta \in (0,1)$).



The greedy algorithm works as follows:
it ignores the first $\delta n / k$ items\footnote{We assume $\delta n / k$ is an integer without loss of generality.} and then starts processing the items one by one.
When we process the $i$'th item, $c_i$, the algorithm computes the optimal solution $M_i$ of
the first $i$ items (recall that we assume the algorithm has access to $C_{\text{dummy}}$, a large enough pool of zero valued items so there is always a feasible solution). The greedy algorithm retains $c_i$ if and only if $c_i$ participates in $M_i$.
We assume that $M_i$ is unique for every $i$ (we can achieve this with an arbitrary consistent tie breaking rule, say among matchings of the same
value we prefer the one that maximizes the sum of the indices of the matched items.).
Since the optimal solutions correspond to maximum-weighted bipartite-matchings between the items and the constraints, we have the following lemma.
\begin{lemma} \label{lem:opti-optn}
Suppose that the optimal solution, denoted by $M$, does not appear before round $\delta n / k$. Then it is a subset of the retained items.
\end{lemma}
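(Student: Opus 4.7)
The lemma reduces to the following claim: for every $c_i\in M$, we have $c_i\in M_i$. Indeed, the hypothesis that $M$ does not appear before round $\delta n/k$ means every $c_i\in M$ is processed by the greedy rule (which retains $c_i$ iff $c_i\in M_i$), so the reduced claim implies that all of $M$ is retained. The plan is to prove the reduced claim by contradiction, using a standard augmenting-path swap between the two (unique) optimal matchings $M$ and $M_i$.

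First I would set up the augmenting path. Assume for contradiction that $c_i\in M$ but $c_i\notin M_i$. Both $M$ and $M_i$ are saturated matchings on the constraint side $L$, so in the symmetric difference $M\triangle M_i$ every left vertex has degree $0$ or $2$, and the connected components are alternating paths (with right-vertex endpoints) and alternating cycles. The vertex $c_i$ has degree $1$ in $M\triangle M_i$, so it is an endpoint of an alternating path $P$. A parity/bipartiteness argument shows $P$ has even length and its other endpoint $r^*$ lies in $M_i\setminus M$; since $r^*\in M_i$, it must lie in $\{c_1,\ldots,c_i\}\cup C_{\text{dummy}}$, and in particular its index (if $r^*$ is a real item) is strictly less than $i$.

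Next I would perform the double swap. Swapping $M$- and $M_i$-edges along $P$ inside $M$ yields a saturated matching $M^{new}$ on the full item set (valid because $r^*$ is unmatched in $M$); optimality of $M$ gives $W_M\geq W_{M_i}$, where $W_M$ and $W_{M_i}$ denote the total weights of the $M$-edges and $M_i$-edges of $P$. The analogous swap inside $M_i$ yields a saturated matching $M_i^{new}$ on the first $i$ items together with $C_{\text{dummy}}$ (valid because every right endpoint touched by $P$ lies in this smaller graph), and optimality of $M_i$ gives the reverse inequality. Hence $W_M=W_{M_i}$ and $M_i^{new}$ is also optimal on the first $i$ items.

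Finally I would invoke the tie-breaking rule (preferring the matching of larger sum of matched-item indices) to rule out this equality. Comparing $M_i$ and $M_i^{new}$, the only change in matched right vertices is that $r^*$ is replaced by $c_i$, and since $r^*$ has index strictly less than $i$ (or is a dummy), $M_i^{new}$ strictly exceeds $M_i$ in index-sum, contradicting the uniqueness of $M_i$. I expect the last step to be the main obstacle: without appealing to the tie-breaking convention, the two optimality-based inequalities collapse into $W_M=W_{M_i}$ and do not by themselves yield a contradiction; the proof only closes once the index-sum rule is used to break the tie in the specific direction.
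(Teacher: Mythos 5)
Your proof follows the same overall route as the paper's: form $Z=M\triangle M_i$, extract the alternating path $P$ with $c_i$ as an endpoint, and reason about swaps along $P$. The difference is in how the contradiction is closed. The paper asserts that optimality together with uniqueness of $M$ force a \emph{strict} inequality (the $M$-weight of $P$ strictly exceeds the $M_i$-weight), and then a single swap inside $M_i$ contradicts the optimality of $M_i$. You instead derive the two weak inequalities from the optimality of $M$ and of $M_i$ respectively, conclude equality of the path weights, and only then break the tie via the index-sum rule applied to $M_i$. Your closing is actually the tighter one under the tie-breaking rule the paper stipulates: the swap that modifies $M$ replaces $c_i$ (index $i$) by $r^*$ (index $<i$ or a dummy), so it \emph{decreases} the index-sum of $M$ and the uniqueness of $M$ under the index-sum tie-break does not by itself rule out the equality case; whereas the swap that modifies $M_i$ strictly \emph{increases} the index-sum because $c_i$ is the highest-index item available, which is exactly the direction your argument exploits. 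Your reduction to the case $c=c_i$ is also sound, since the lemma requires precisely that every $c_j\in M$ with $j\ge \delta n/k$ be present in $M_j$.
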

\begin{proof}
Let $i \ge \delta n / k$. Consider an item $c$ matched by $M$ and assume
by contradiction that $c$ is not matched in~$M_i$.
Consider $Z=M \triangle M_i$ (we take the symmetric difference of $M$ and $M_i$ as sets of edges). Since
$M$ and $M_i$ do not necessarily
match the same items then the edges in $Z$ induce a collection of alternating paths and cycles where each path $L$
has an item matched by $M$ and not by $M_i$ at one end, and  an item matched by
$M_i$ and not by $M$ at the other hand. Except for its two ends, an alternating path
contains items  that are matched by both $M$ and $M_i$.
From the optimality and the uniqueness of $M$ follows that for each path the value of $M$ is larger than the value of
$M_i$.

Since $c$ is matched by $M$ and
not by $M_i$ there is a path $L$ in $Z$ that starts at $c$ and ends at some item that
is matched by $M_i$ and not by $M$.

It follows that all the items in $L$ are in $M_i$
and if we match them according to $M$ then the value that we gain from them increases.
This contradicts the optimality of $M_i$.

(Note that, in fact, there
are no cycles in $Z$, since they will imply that there are multiple optimal solutions, contradicting the uniqueness of $M_i$ and $M$.)
\end{proof}

Lemma \ref{lem:opti-optn} implies that, with high probability, if we collect all items that
are in the optimal solution of the subset of items that precedes them then
the set of items that we have at the end contains the optimal solution. Indeed, our algorithm fails if at least one of the items in the optimal solution $M$ is among the first $\delta n / k$ items. The probability that this occurs is at most $\delta$ via a union bound and the fact that the probability that any fixed item in $M$ is among the first $\delta n / k$ items is exactly $\delta / k$.

The next question is: how large is the subset of the items which we retain?
The next lemma answers this question  in an average sense.

\begin{lemma} \label{lem:size}
Assume that at the first stage the algorithm receives the items in a random order.
Then the expected number of items that the first stage keeps is
$O \bigl(k\log \min \big\{\tfrac{n}{k}, \tfrac{k}{\delta} \big\} \bigr)$.
\end{lemma}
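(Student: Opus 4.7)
The plan is to bound the marginal probability of retention for each individual item using an exchangeability argument, and then sum these bounds using linearity of expectation, concluding by a case split according to whether $n/k$ or $k/\delta$ is smaller.

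\paragraph{Step 1: Marginal retention probability.} First I would fix an index $i$ with $\delta n / k < i \leq n$ and bound $\Pr[c_i \text{ retained}]$. The key observation is that $M_i$ is a function of the unordered multiset $C_i := \{c_1,\ldots,c_i\}$ alone (interpreting the tie-breaking rule as a canonical one on the items, not on arrival indices), and that $M_i$ matches at most $k$ items of $C_i$, since the $L$-side of the bipartite graph has exactly $k$ vertices. Conditioning on the set $C_i$ and using the random-order assumption, the identity of $c_i$ is uniformly distributed over the $i$ elements of $C_i$. Hence
\[
\Pr[c_i \in M_i \mid C_i] \;\leq\; \frac{|M_i|}{i} \;\leq\; \frac{k}{i},
\]
and taking expectations yields $\Pr[c_i \text{ retained}] \leq \min\{1, k/i\}$.

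\paragraph{Step 2: Linearity of expectation and case split.} Since the greedy algorithm retains no items among the first $\delta n / k$, linearity of expectation gives
\[
\Ex[\text{\# retained}] \;\leq\; \sum_{i=\delta n/k + 1}^{n} \min\!\Bigl\{1,\,\tfrac{k}{i}\Bigr\}.
\]
I would then split into two cases. If $\delta n / k \geq k$ (equivalently $k/\delta \leq n/k$), every term equals $k/i$, and a harmonic-sum estimate gives
\[
\sum_{i=\delta n/k + 1}^{n} \tfrac{k}{i} \;\leq\; k\ln\!\tfrac{n}{\delta n / k} \;=\; k\ln(k/\delta).
\]
If instead $\delta n / k < k$, the indices $i \leq k$ contribute at most $k$ terms each bounded by $1$, for a total of $O(k)$, while the indices $i > k$ contribute at most $\sum_{i=k+1}^{n} k/i \leq k\ln(n/k)$. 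Summing gives $O(k\log(n/k))$.

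\paragraph{Step 3: Combine.} Both cases are bounded by $O\!\bigl(k\log\min\{n/k,\,k/\delta\}\bigr)$, which matches the claim.

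\paragraph{Main obstacle.} The only subtle point is the symmetry claim in Step~1: to conclude that $c_i$ is uniform on $C_i$ we need $M_i$ to depend only on the unordered set $C_i$, so that conditioning on $C_i$ does not destroy the uniformity. This requires interpreting the tie-breaking rule in an order-independent manner (e.g.\ using a fixed canonical labeling of items, or noting that ties occur with probability zero for typical distributions). Once this point is addressed, the rest of the argument is elementary calculation with harmonic sums.
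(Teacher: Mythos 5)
Your proposal is correct and follows essentially the same argument as the paper: condition on the unordered set of the first $i$ items, use that the last arrival is uniform over it and $\lvert M_i\rvert\le k$ to get the $\min\{1,k/i\}$ bound, then sum harmonically with the same case split on $\delta n/k$ versus $k$. Your remark on order-independent tie-breaking is a fair point of care (the paper's stated index-based rule is order-dependent, handled there by the uniqueness/atomlessness assumption), but it does not change the route.
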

\begin{proof}
Let $i \ge \delta n / k$ and denote $X_i$ as an indicator that is one if and only if the $i$'th item belongs to $M_i$.
Condition the probability space on the {\em set} $L_i$ of the first $i$ items (but not on their order).
Each element of $L_i$ is equally likely to arrive last. So since $\lvert M_i\rvert \leq k$,
then the probability that the element arriving last in $L_i$ is in $\opt_i$ is at most $k/i$ if $k< i$
or at most $1$ otherwise.
It follows that $E[X_i\mid L_i] \leq \min \bigl\{ \tfrac{k}{i},1 \bigr\}$.
Since this holds for any $L_i$,  it also holds unconditionally as well.
Therefore, if $\delta n / k < k$ then by the fact that $\sum_{i=k+1}^{n} \tfrac{1}{i} \le \log \tfrac{n}{k}$, the expected number of retained items is
\[
	k - \frac{\delta n}{k} + \sum_{i = k + 1}^{n} \frac{k}{i} = O \bigg(k \log \frac{n}{k}\bigg).
\]
Similarly, if $\delta n / k \ge k$ then the expected number of retained items is
\[
	\sum_{i = \delta n / k + 1}^{n} \frac{k}{i} = O \bigg(k \log \frac{k}{\delta} \bigg). \qedhere
\]
\end{proof}


\section{Thresholds-policies}\label{sec:thresholds}

We next discuss a framework to design algorithms that exploit the training set to learn
policies that are applied in the first phase of the matching process.
We would like to frame this in standard ML formalism by
phrasing this problem as learning a class $\H$ of policies such that:
\begin{itemize}
\item {\bf $\H$ is not too small:}
The policies in $\H$ should yield solutions with high values (optimal, or near-optimal).
\item {\bf $\H$ is not too large:}
$\H$ should satisfy some uniform convergence properties;
i.e.\ the performance of each policy in $\H$ on the training set
is close, with high probability, to its expected real-time performance
on the sampled items during the online selection process.
\end{itemize}

Indeed, as we now show these demands are met by the class $\T$ of thresholds policies (\cref{def:threshpolicies}).
We first show that the class of thresholds-policies contains an optimal policy, and in the sequel we show that it satisfies attractive uniform convergence properties.

\paragraph{An assumption (values are unique).}
We assume that for each constraint $P_i$, the marginal distribution over the value of $c \sim D$ conditioned on $c\in P_i$ is atomless; namely $\Pr_{c\sim D}[v(c) = v \mid c\in P_i]=0$ for every $v\in [0,1]$.
This assumption can be removed by adding artificial tie-breaking rules, but making it will simplify some of the technical statements.

\begin{theorem}[There is a thresholds policy that retains an optimal solution] \label{thm:thresopt}
For any set of items $C$, there exists a thresholds vector $T\in \T$
that retains exactly $k$ items that form an optimal solution for $C$.
\end{theorem}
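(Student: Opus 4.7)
The plan is to construct the thresholds vector explicitly from an optimal solution $M^*$ and argue that the resulting policy retains exactly $M^*$. Fix any optimal saturated matching $M^* = M_1^* \cup \cdots \cup M_d^*$ with $M_i^* \subseteq P_i$ and $\lvert M_i^*\rvert = k_i$, and set
\[
t_i \;=\; \min_{c \in M_i^*} v_i(c),
\]
with the convention $t_i = +\infty$ when $k_i = 0$. Since $\lvert M^*\rvert = k$, it suffices to show that the set retained under $T = (t_1,\ldots,t_d)$ is exactly $M^*$.

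The easy direction is that every $c \in M_i^*$ lies in $P_i$ with $v_i(c) \geq t_i$, so it clears the $i$-th threshold and is retained. The main obstacle is the reverse inclusion — ruling out retained items outside $M^*$. I would handle this with a standard exchange argument. Suppose toward contradiction that some $c' \notin M^*$ clears the $i$-th threshold, so $c' \in P_i$ and $v_i(c') \geq t_i$. Let $c_{\min} \in M_i^*$ realize the minimum defining $t_i$; then $v_i(c') \geq v_i(c_{\min})$, and by the atomlessness assumption stated just before the theorem this strengthens to $v_i(c') > v_i(c_{\min})$. Form
\[
M^{**} \;:=\; \bigl(M^* \setminus \{c_{\min}\}\bigr) \cup \{c'\},
\]
with $c'$ assigned to $P_i$. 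Since $c' \notin M^*$, the parts $M_j^{**}$ remain pairwise disjoint and retain the required sizes, so $M^{**}$ is feasible; yet its value exceeds that of $M^*$ by $v_i(c') - v_i(c_{\min}) > 0$, contradicting optimality of $M^*$.

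The one subtlety worth flagging is the atomlessness step: without it the exchange only yields weak dominance, so one would either invoke an auxiliary tie-breaking rule (e.g., pick the optimal $M^*$ whose edge-set is lexicographically largest, in the spirit of the tie-breaking used in \cref{sec:online}) or note that equality of values is a measure-zero event under $D$. With this settled, the construction above produces a policy in $\T$ that retains exactly $k$ items forming an optimal solution, as required.
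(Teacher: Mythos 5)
Your proof is correct and follows essentially the same exchange argument as the paper: set each threshold $t_i$ to the minimum value of the items assigned to $P_i$ in a fixed optimal solution, note that all optimal items clear the thresholds, and rule out any extraneous retained item by a swap that strictly increases value, with atomlessness upgrading the weak inequality to strict. Your write-up is marginally more explicit about the degenerate $k_i = 0$ case and the feasibility check for the swapped solution, but the construction and the contradiction are the same as the paper's.
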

\begin{proof}
Let $S$ denote the set of $k$ items in an optimal solution for $C$,
and let $S_i\subseteq S\cap P_i$ be the subset of $M$ that is assigned to the constraint $P_i$.
Define $t_i = \min_{c\in S_i}v_{i}(c)$, for $i\geq 1$, 
Clearly, $T$ retains all the items in $S$.
Assume towards contradiction that $T$ retains an item $c_j\notin S$,
and assume that $P_i$ is a constraint such that $c_j\in P_i$ and $v_{i}(c_j) \geq  t_i$.
Since by our assumption on $D$ all the values $v_{i}(c_j)$ are distinct it follows that $v_{i}(c_j) > t_i$.
Thus, we can modify $S$ by replacing $c_j$ with the item of minimum value in~$S_i$
and increase the total value. This contradicts the optimality of $S$.
\end{proof}

We next establish generalization bounds for the class of thresholds-policies.

\subsection{Uniform convergence of the number of retained items}\label{sec:unifcov}

For a sample $C \sim D^{n}$ and a thresholds-policy $T \in \T$, we denote by $R_{i}^{T}(C) = \{c : c\in P_i \text{ and } v_{i}(c) \ge t_{i}\}$ the set of items that are retained by the threshold $t_{i}$, and we denote its expected size by $\rho_{i}^T=\Ex_{C\sim D^n} \bigl[ \lvert R_{i}^T(C) \rvert \bigr]$. Similarly we denote by $R^{T}(C) = \cup_{i} R_{i}^{T}(C)$ the items retained by $T$, and by $\rho^{T}$ its expectation.
We prove that the sizes of $R_{i}^{T}(C)$ and $R^{T}(C)$ are concentrated around their expectations uniformly for all thresholds policies.

The following theorems establish uniform convergence results for the number of retained items. Namely, with high probability we have $R^{ T}_{i} \approx \rho^{ T}_{i}$, $ R^{ T} \approx \rho^{T}$
simultaneously for all $ T\in \T$ and $i\leq d$.

\begin{theorem}[Uniform convergence of the number of retained items] \label{thm:ucretained}
With probability at least $1-\delta$ over $C\sim D^n$, the following holds for all policies $ T \in \T$ simultaneously:
\begin{enumerate}
\item If $\rho^{ T} \ge k$, then $(1-\epsilon) \rho^{ T} \le \lvert R^{ T}(C) \rvert \le (1+\epsilon) \rho^{ T}$~,~\mbox{and}
\item if $\rho^{T} < k$, then $\rho^{ T} - \epsilon k \le \lvert R^{ T}(C) \rvert \le \rho^{ T} + \epsilon k$~,
\end{enumerate}
where
\[
\epsilon = O\left( \sqrt{\frac{d \log (d) \log (n / k) + \log(1/\delta) }{k}} \right).
\]
\end{theorem}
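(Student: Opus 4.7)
The plan is to prove the bound in three steps, combining a VC-style complexity argument with a Bernstein/Talagrand concentration inequality.

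First I would bound the combinatorial complexity of the set class $\mathcal{R} = \{R^T : T \in \T\}$. For each property $i$, the subfamily $\{R_i^T(C) : t_i \in [0,1]\}$ is a chain of nested subsets of $C$ monotone in $t_i$, and therefore has VC dimension $1$ and at most $n+1$ distinct restrictions to any $n$-point sample. Since $R^T = \bigcup_{i=1}^{d} R_i^T$, the class $\mathcal{R}$ is a union of $d$ such monotone chains, equivalently the complement of the family of axis-aligned orthants in $[0,1]^d$ anchored at the origin. Standard closure rules for VC classes (union and complement) give that $\mathcal{R}$ has VC dimension $O(d \log d)$.

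Next I would apply a Bernstein-type uniform convergence inequality, e.g.\ Vapnik's relative-deviation bound (or Talagrand's inequality for empirical processes on a VC class, as hinted in the introduction). For any fixed $T$, $|R^T(C)|$ is a sum of $n$ i.i.d.\ Bernoulli$(p^T)$ random variables with mean $\rho^T = n p^T$, so pointwise Bernstein yields
\[
    \bigl||R^T(C)| - \rho^T\bigr| = O\bigl(\sqrt{\rho^T\log(1/\delta)} + \log(1/\delta)\bigr)
\]
with probability at least $1-\delta$. Lifting this to a uniform bound over $\T$ using the VC complexity produces, with probability at least $1-\delta$ simultaneously over all $T \in \T$,
\[
    \bigl||R^T(C)| - \rho^T\bigr| \leq O\bigl(\sqrt{\rho^T\cdot \Lambda}\,\bigr),
\]
for an effective complexity term $\Lambda = O\bigl(d\log d \cdot \log(n/k) + \log(1/\delta)\bigr)$. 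To match the two cases in the theorem I would then split on the size of $\rho^T$: when $\rho^T \geq k$, the bound is at most $\rho^T \cdot \sqrt{\Lambda/k} = \epsilon \rho^T$; when $\rho^T < k$, using $\sqrt{\rho^T} \leq \sqrt{k}$, the bound is at most $\sqrt{k \Lambda} = \epsilon k$. Both cases exactly match the statement with $\epsilon = O\bigl(\sqrt{\Lambda/k}\bigr)$.

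The main technical obstacle is obtaining the factor $\log(n/k)$ rather than $\log n$ inside $\Lambda$; a vanilla VC$+$Bernstein argument gives only the weaker $\log n$. I would recover the sharper $\log(n/k)$ by a localization argument on the thresholds: each threshold with $\rho_i^T \geq k/d$ can be snapped to one of $O(\log(n/k))$ dyadic scales on $[k/d, n]$, while thresholds with $\rho_i^T < k/d$ contribute only an additive $O(\epsilon k)$ error absorbed into the additive case of the theorem. Applying Bernstein cell-by-cell, so that the variance in each cell is controlled by the cell's upper endpoint, and taking a union bound over the $O(\log(n/k))$ relevant dyadic scales per property yields the desired complexity term $d\log d \cdot \log(n/k)$.
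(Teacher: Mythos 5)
Your first two steps coincide with the paper's: the bound $\mbox{VC}({\cal R}) = O(d\log d)$ is exactly \cref{lem:vcupperbound}, and the reduction of the two cases of the theorem to a single relative-deviation bound of the form $\bigl||R^T(C)|-\rho^T\bigr| \lesssim \sqrt{\rho^T\Lambda}+\Lambda$ with $\Lambda=\epsilon^2 k$ is fine. The gap is in your third step, which is precisely where the stated $\log(n/k)$ factor has to come from. The paper does not rederive this: it invokes the relative $(p,\epsilon)$-approximation theorem of Har-Peled and Sharir with $p=k/n$, whose $\log(1/p)$ dependence is exactly the $\log(n/k)$ in the statement, and that dependence is the nontrivial content of their theorem (going back to Li--Long--Srinivasan), not something a union bound over dyadic scales delivers.

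Concretely, your localization sketch breaks in both of its natural readings. If ``snapping each threshold to one of $O(\log(n/k))$ dyadic scales'' means union-bounding over the snapped policies only, the discretization error is a multiplicative factor of up to $2$ in $\rho_i^T$ per coordinate, which cannot be absorbed into a $(1\pm\epsilon)$ guarantee; repairing this by a net of relative granularity $\epsilon$ within each scale costs $d\log(1/\epsilon)\approx \tfrac{d}{2}\log k$ in the exponent of the union bound, and $d\log k$ is \emph{not} $O(d\log d\,\log(n/k)+\log(1/\delta))$ in general (take $k=n/\mathrm{polylog}(n)$, where $\log(n/k)=\Theta(\log\log n)$ but $\log k=\Theta(\log n)$). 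If instead you mean a genuinely uniform bound over the continuum of thresholds inside each cell, then ``Bernstein cell-by-cell'' is not enough: the standard symmetrization/growth-function argument over the full $n$-point sample reintroduces the $\log n$ you are trying to avoid, and eliminating it requires the localized ghost-sample/partitioning machinery that constitutes the proof of the relative $(p,\epsilon)$-approximation theorem. So either cite that theorem (as the paper does) or carry out its proof; the step as sketched does not yield the claimed $\epsilon$.
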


\begin{theorem}[Uniform convergence of the number of retained items per constraint] \label{thm:ucretainedper}
With probability at least $1-\delta$ over $C\sim D^n$, the following holds for all policies $ T \in \T$ and all $i\leq d+1$ simultaneously:
\begin{enumerate}
\item If $\rho^{ T}_{i} \ge k$, then $(1-\epsilon) \rho^{ T}_{i} \le \lvert R^{ T}_{i}(C) \rvert \le (1+\epsilon) \rho^{ T}_{i}$~,~\mbox{and}
\item if $\rho^{ T}_{i} < k$, then $\rho^{ T}_{i} - \epsilon k \le \lvert R^{ T}_{i}(C) \rvert \le \rho^{ T}_{i} + \epsilon k$~,
\end{enumerate}
where
\[
\epsilon = O\left( \sqrt{\frac{\log (d) \log (n / k) + \log(1/\delta) }{k}} \right)~.
\]
\end{theorem}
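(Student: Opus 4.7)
The plan is to apply Bernstein's inequality to each individual policy and constraint, and then make the bound uniform by discretizing the threshold space and union bounding over constraints. This is essentially a Bernstein-type analog of the Vapnik--Chervonenkis relative deviation inequality, made tractable by the fact that, for each fixed constraint, the underlying class of threshold indicators has VC dimension one.

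First, I would fix $i\in\{1,\ldots,d+1\}$ and a policy $T$ with $t_i=t$, and apply Bernstein's inequality to the sum of $n$ independent Bernoulli indicators comprising $\lvert R_{i}^{T}(C)\rvert$:
\[
\Pr\Bigl[\, \bigl\lvert \lvert R_{i}^{T}(C)\rvert - \rho_i^T \bigr\rvert \geq \eta \Bigr] \leq 2\exp\!\left(-\frac{\eta^2/2}{\rho_i^T + \eta/3}\right).
\]
In the regime $\rho_i^T \geq k$ I plug in $\eta=\epsilon\rho_i^T$ to obtain a tail of $2\exp(-\Omega(\epsilon^2\rho_i^T))\leq 2\exp(-\Omega(\epsilon^2 k))$; in the regime $\rho_i^T < k$ I plug in $\eta=\epsilon k$, so that $\rho_i^T+\eta/3=O(k)$ and the tail is again $2\exp(-\Omega(\epsilon^2 k))$. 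The key point is that in both regimes the exponent scales with $k$, not $n$, which is exactly what produces the denominator $k$ in the theorem's bound on $\epsilon$.

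To make the estimate uniform over $\T$, I would exploit that $\lvert R_{i}^{T}(C)\rvert$ depends only on $t_i$ and that both $t\mapsto \rho_i^T$ and $t\mapsto \lvert R_{i}^{T}(C)\rvert$ are monotonically nonincreasing. I choose a geometric grid of $t$'s at which $\rho_i^T$ equals $k,(1+\epsilon)k,(1+\epsilon)^2 k,\ldots,n$ (of size $O(\log(n/k)/\epsilon)$) together with an additive grid at which $\rho_i^T$ equals $0,\epsilon k,\ldots,k$ (of size $O(1/\epsilon)$). Monotonicity sandwiches both $\rho_i^T$ and $\lvert R_{i}^{T}(C)\rvert$ at any intermediate $t$ between adjacent grid values, with the multiplicative and additive slacks matching at the seam $\rho_i^T = k$. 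Union bounding the pointwise Bernstein estimate over these $O(\log(n/k)/\epsilon)$ thresholds and over all $d+1$ constraints gives a total failure probability of $O(d\log(n/k)/\epsilon)\cdot\exp(-\Omega(\epsilon^2 k))$; setting this equal to $\delta$ and solving for $\epsilon$ recovers a bound at least as sharp as the one claimed.

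The main obstacle I foresee is the sandwich step: one must verify that pointwise convergence on the grid really does lift to uniform convergence over all of $\T$ with the two-regime conclusion intact, checking in particular that a policy whose expectation $\rho_i^T$ sits just below $k$ cannot produce an empirical count that breaches the additive slack. Handling the seam requires choosing the grid so that the multiplicative and additive regimes agree at $\rho_i^T=k$, and carefully tracking what monotonicity gives at intermediate $t$. A slicker alternative would be to invoke Bousquet's or Talagrand's inequality for the empirical process indexed by the VC-$1$ class of threshold indicators per constraint (as hinted by the authors' mention of Talagrand in the introduction), which packages the sup-deviation and its local variance dependence in a single inequality; afterwards only the union bound over the $d+1$ constraints remains.
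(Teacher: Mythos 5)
Your approach is correct in substance but genuinely different from the paper's. The paper never touches Bernstein or an explicit grid: it bounds the VC dimension of the class ${\cal Q}=\{R^{T}_{i}: T\in\T,\ i\leq d\}$ by $O(\log d)$ (each per-constraint class ${\cal Q}_i$ has VC dimension $1$ because of exactly the monotonicity you use, and \cref{lem:union} handles the union of the $d$ classes via Sauer's lemma), and then invokes the relative $(p,\eps)$-approximation theorem of Har-Peled and Sharir with $p=k/n$ as a black box; that is where the factor $\log(d)\log(n/k)$ in the stated $\eps$ comes from. You instead re-derive the relative-approximation guarantee by hand for each one-dimensional threshold class (pointwise Bernstein with variance $\rho_i^T$, a multiplicative grid above $k$ and an additive grid below $k$, a monotone sandwich with the seam at $\rho_i^T=k$), and then union bound over the $d+1$ constraints. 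This is self-contained and in fact decouples $d$ from $n/k$: you get an additive $\log d$ rather than the paper's product $\log(d)\log(n/k)$. The one claim to temper is that your bound is ``at least as sharp as the one claimed'': the union bound over a grid of size $\Theta(\log(n/k)/\eps)$ puts a $\log(1/\eps)\lesssim\log k$ term (plus $\log\log(n/k)$) into the exponent, so your $\eps$ is of order $\sqrt{(\log d+\log k+\log(1/\delta))/k}$, which matches or beats the stated bound whenever $\log(n/k)=\Omega(\log k/\log d)$ or $\delta\le 1/\mathsf{poly}(k)$, but is slightly weaker in the corner regime where $n/k$ is subpolynomial in $k$ and $\delta,d$ are constant. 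If you want the stated bound verbatim in all regimes, either run a chaining/peeling refinement to kill the $\log(1/\eps)$ term, or do what amounts to the paper's shortcut per constraint: apply the relative $(p,\eps)$-approximation result (or, as you suggest, a Talagrand/Bousquet-type empirical-process bound) to each VC-dimension-$1$ class with confidence $\delta/d$ and union over $i$, which yields $\eps=O\bigl(\sqrt{(\log(n/k)+\log(d/\delta))/k}\bigr)$ and subsumes the theorem. Your flagged worry about the seam is the right thing to check, and it does go through with a constant-factor loss in $\eps$.
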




The proofs of \Cref{thm:ucretained,thm:ucretainedper} are based on standard VC-based
uniform convergence results, and technically the proof boils down to bounding the VC-dimension of the families
\[{\cal R} = \{R^{ T} :  T\in\T\}~~\text{ and } ~~{\cal Q} = \{R^{ T}_{i} :  T\in\T,\ i\leq d\}.\]
Indeed, in \cref{sec:generalization} we prove the following.

\begin{lemma}
\label{lem:vcupperbound}
$\mbox{VC}({\cal R}) = O(d \log d)~.$
\end{lemma}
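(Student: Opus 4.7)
The plan is to write ${\cal R}$ as the class of $d$-fold unions of $d$ very simple concept classes, and then apply the standard bound on the VC-dimension of unions.

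For each $i\in\{1,\ldots,d\}$ consider the sub-family ${\cal R}_i=\{R_i^T : T\in\T\}$. Since $R_i^T=\{c\in P_i : v_i(c)\geq t_i\}$ depends only on the single threshold $t_i\in[0,1]$ and these subsets of $P_i$ form a chain under inclusion (lowering $t_i$ can only add items), ${\cal R}_i$ is totally ordered by inclusion. Hence $\mbox{VC}({\cal R}_i)\leq 1$: for any two items $c\neq c'$ with $v_i(c)<v_i(c')$, every set in ${\cal R}_i$ that contains $c$ must also contain $c'$, so $\{c,c'\}$ cannot be shattered. By construction $R^T=R_1^T\cup\cdots\cup R_d^T$, so every member of ${\cal R}$ is a $d$-fold union with one summand from each ${\cal R}_i$.

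Next I would invoke the standard Blumer--Ehrenfeucht--Haussler--Warmuth bound: if ${\cal C}_1,\ldots,{\cal C}_d$ are concept classes over a common domain with $\mbox{VC}({\cal C}_i)\leq v_i$, then the class of $d$-fold unions $\{C_1\cup\cdots\cup C_d : C_i\in{\cal C}_i\}$ has VC-dimension $O(V\log V)$, where $V=\sum_i v_i$. Applying this with each $v_i\leq 1$ gives $V\leq d$ and hence $\mbox{VC}({\cal R})=O(d\log d)$, as required.

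For completeness, the short proof of this union bound relies only on the Sauer--Shelah lemma: if a set $S$ of size $m$ were shattered by the union class, then $2^m$ distinct subsets of $S$ would be realizable as $d$-fold unions; Sauer--Shelah guarantees that each ${\cal C}_i$ realizes at most $(em/v_i)^{v_i}$ subsets of $S$, so the total number of realizable unions is at most $\prod_i (em/v_i)^{v_i}\leq (em)^V$. The inequality $2^m\leq (em)^V$ then forces $m=O(V\log V)$. There is no serious obstacle to the argument---once one observes that each property $i$ contributes only a one-parameter nested family to ${\cal R}$, the rest is routine bookkeeping with Sauer--Shelah.
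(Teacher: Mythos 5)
Your proof is correct, but it takes a somewhat different route from the paper's. The paper argues directly on a shattered set $S$ of size $m$: since each threshold $t_i$ has at most $m+1$ meaningful positions relative to the values of items in $S$, it bounds $\lvert {\cal R}|_S\rvert \le (m+1)^d$ and concludes $2^m \le (m+1)^d$, hence $m = O(d\log d)$ — no appeal to Sauer--Shelah or to a general composition theorem is needed. You instead factor ${\cal R}$ as the class of setwise unions $R_1^T\cup\cdots\cup R_d^T$, observe that each per-property class ${\cal R}_i$ is a chain and hence has VC dimension at most $1$, and then invoke the Blumer--Ehrenfeucht--Haussler--Warmuth bound (proved via Sauer--Shelah) for $d$-fold unions. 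The two arguments bottom out in essentially the same counting inequality ($2^m$ versus roughly $m^d$ realizable restrictions), so neither buys a better constant; your version is more modular and would generalize immediately if the per-property classes had larger VC dimension, while the paper's is more elementary and self-contained. Note also that your ``VC dimension $1$ per property'' observation is exactly the one the paper uses in its proof of \cref{lem:vc2}, but there it is combined with a bound on the VC dimension of a \emph{union of families} (${\cal Q}=\cup_i{\cal Q}_i$), which is a different operation from the setwise unions you need here — you correctly used the composition bound rather than that lemma. One cosmetic point: your sentence ``for any two items $c\neq c'$ with $v_i(c)<v_i(c')$\dots'' implicitly assumes both items lie in $P_i$; the chain property you state just before already covers the case where one of them is outside $P_i$ (such an item lies in no set of ${\cal R}_i$), so the claim $\mbox{VC}({\cal R}_i)\le 1$ stands.
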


\begin{lemma}\label{lem:vc2}
$\mbox{VC}({\cal Q}) = O(\log d)~.$
\end{lemma}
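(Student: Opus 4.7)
The plan is to decompose $\mathcal{Q}$ into $d$ subfamilies, one for each constraint index, show each has VC-dimension at most $1$, and then combine using the standard Sauer--Shelah bound for unions of classes.

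First, I would fix $i \in \{1,\ldots,d\}$ and consider $\mathcal{Q}_i = \{R^T_i : T \in \T\}$. Observe that $R^T_i = \{c \in P_i : v_i(c) \geq t_i\}$ depends only on the single scalar $t_i$, so $\mathcal{Q}_i$ is in fact a one-parameter family: the family of right half-lines applied to the values $v_i(c)$ of items in $P_i$. A short direct argument shows $\mbox{VC}(\mathcal{Q}_i) \leq 1$: for any pair of items $c,c' \in P_i$ with (say) $v_i(c) < v_i(c')$, no threshold $t_i$ can pick out $\{c\}$ alone, since $v_i(c) \geq t_i$ forces $v_i(c') \geq t_i$ as well.

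Next I would pass to growth functions. By Sauer--Shelah, the shatter function of each $\mathcal{Q}_i$ on $m$ points is at most $m+1$. Since $\mathcal{Q} = \bigcup_{i=1}^d \mathcal{Q}_i$, its shatter function on any set of $m$ items satisfies
\[
\Pi_{\mathcal{Q}}(m) \;\leq\; \sum_{i=1}^d \Pi_{\mathcal{Q}_i}(m) \;\leq\; d(m+1).
\]
For $\mathcal{Q}$ to shatter a set of size $m$, we need $\Pi_{\mathcal{Q}}(m) \geq 2^m$, i.e.\ $d(m+1) \geq 2^m$. Solving this inequality gives $m \leq \log_2 d + \log_2(m+1)$, which yields $m = O(\log d)$, and hence $\mbox{VC}(\mathcal{Q}) = O(\log d)$.

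The argument is essentially a two-liner and I do not anticipate a serious obstacle. The only mild subtlety is confirming that $\mathcal{Q}_i$ really is a single-parameter family (it does not depend on the other coordinates $t_j$ of $T$, nor on $d$), after which the union-of-VC-classes bound is completely standard. The same proof gives the matching lower bound $\mbox{VC}(\mathcal{Q}) = \Omega(\log d)$ whenever the distribution is rich enough that each $P_i$ contains items with distinguishable values, but the upper bound is all that is required here.
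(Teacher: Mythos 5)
Your proposal is correct and follows essentially the same route as the paper: decompose ${\cal Q}$ into the per-constraint families ${\cal Q}_i$, show each is a one-parameter threshold class of VC-dimension at most $1$, and bound the VC-dimension of the union via Sauer--Shelah (the paper packages this last step as a separate lemma on unions of VC-dimension-$1$ classes, which is exactly your growth-function comparison $d(m+1) \geq 2^m$).
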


Using \cref{lem:vcupperbound,lem:vc2}, we can now apply standard uniform convergence results from VC-theory to derive \Cref{thm:ucretained,thm:ucretainedper}.

\begin{definition}[Relative $(p,\epsilon)$-approximation; \citealp{har2011relative}]\label{def:relative}
Let ${\cal F}$ be a family of subsets over a domain $X$, and let $\mu$ be a distribution on $X$.
$Z\subseteq X$ is a $(p,\epsilon)$-approximation for ${\cal F}$ if for each $f\in F$ we have,
\begin{enumerate}
\item If $\mu(f) \ge p$, then $(1-\epsilon)\mu(f) \le \wh \mu(f) \le (1+\epsilon) \mu(f)$,
\item If $\mu(f) < p$, then $\mu(f)- \epsilon p \le \wh \mu(f) \le \mu(f) + \epsilon p$,
\end{enumerate}
where $\wh \mu(f) = {\lvert Z\cap F\rvert}/{\lvert Z\rvert}$ is the (``empirical'') measure of $f$ with respect to $Z$.
\end{definition}

The proof of \cref{thm:ucretained,thm:ucretainedper} now follows by plugging $p = k / n$ in \citet[Theorem 2.11]{har2011relative},
which we state in the next proposition.
\begin{proposition}[\citealp{har2011relative}]
Let ${\cal F}$ and $\mu$ like in \Cref{def:relative}.
Suppose ${\cal F}$ has VC dimension~$m$.
Then, with provability at least $1-\delta$, a random sample of size
\[
\Omega \left( \frac{m \log(1/p) + \log(1/\delta)}{\epsilon^{2} p} \right)
\]
is a relative $(p,\epsilon)$-approximation for ${\cal F}$.
\end{proposition}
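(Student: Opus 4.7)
The plan is to follow the classical Vapnik--Chervonenkis symmetrization argument but to replace the standard additive Hoeffding step with a \emph{multiplicative} Chernoff bound. This substitution is precisely what upgrades the standard $\epsilon^{-2}$ sample-complexity rate into the relative $(\epsilon^{2} p)^{-1}$ rate appearing in the proposition.

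First I would draw an independent ghost sample $Z'$ of the same size $n$ from $\mu$ and establish the usual two-sample reduction: whenever some $f\in\mathcal{F}$ has $\lvert\wh\mu_Z(f) - \mu(f)\rvert > \epsilon\cdot\max\{\mu(f),p\}$, with constant probability the same $f$ satisfies $\lvert\wh\mu_Z(f) - \wh\mu_{Z'}(f)\rvert > \epsilon\cdot\max\{\mu(f),p\}/2$, provided $np\gtrsim 1/\epsilon^{2}$. Hence it suffices to control the two-sample deviation uniformly over~$\mathcal{F}$.

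Next, condition on the multiset $Z\cup Z'$ of $2n$ points. The trace $\mathcal{F}\vert_{Z\cup Z'}$ has at most $(2n)^{O(m)}$ distinct subsets by the Sauer--Shelah lemma, so I can fix one such subset $f$ and study only the random split of $Z\cup Z'$ into $Z$ and $Z'$. For each fixed $f$, the count $\lvert Z\cap f\rvert$ is hypergeometric, and the multiplicative Chernoff bound yields
\[
\Pr\!\bigl[\,\lvert\wh\mu_Z(f) - \wh\mu_{Z'}(f)\rvert > \epsilon \max\{\mu(f),p\}/2\,\bigr] \;\le\; 2\exp\!\bigl(-c\,\epsilon^{2}\,\max\{\mu(f),p\}\,n\bigr).
\]
A union bound over the $(2n)^{O(m)}$ traces then gives overall failure probability at most $(2n)^{O(m)}\exp(-c\,\epsilon^{2} p\, n)$, and requiring this to be at most $\delta$ and solving for $n$ produces the stated complexity $n=\Omega\bigl((m\log(1/p)+\log(1/\delta))/(\epsilon^{2} p)\bigr)$.

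The main obstacle, and the reason this demands more care than the textbook VC argument, lies in the two-regime analysis hidden inside $\max\{\mu(f),p\}$: for ``heavy'' sets with $\mu(f)\ge p$ one needs a genuinely multiplicative Chernoff bound, while for ``light'' sets with $\mu(f)<p$ one wants an additive bound on the scale of $p$ itself. Glueing these cases into a single inequality that holds uniformly requires weakening the usual symmetrization precondition $n\mu(f)\gtrsim 1$ to $np\gtrsim 1$, and verifying that the Chernoff estimate survives this lighter regime; this is exactly where the factor $1/p$ in the final sample complexity is born, and it is the step where I would expect the bulk of the technical effort.
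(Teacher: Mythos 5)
This proposition is not proved in the paper at all: it is quoted verbatim as Theorem 2.11 of \citet{har2011relative} (whose proof in turn rests on the relative-deviation bounds of Li, Long and Srinivasan), and the paper only \emph{uses} it with $p=k/n$ to deduce \Cref{thm:ucretained,thm:ucretainedper}. So there is no in-paper argument to compare against; what can be judged is whether your sketch actually delivers the stated bound.

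Your skeleton (ghost-sample symmetrization under the precondition $np\gtrsim 1/\epsilon^{2}$, Sauer--Shelah on the double sample, a multiplicative tail bound per trace, union bound) is the right classical route, but as written it has two gaps. First, after conditioning on $Z\cup Z'$ the quantity $\mu(f)$ is no longer the relevant scale: the hypergeometric tail for the random split is governed by the count $\lvert f\cap (Z\cup Z')\rvert$, so the bound $\exp\bigl(-c\,\epsilon^{2}\max\{\mu(f),p\}\,n\bigr)$ does not follow as stated. The standard fix is to restate the two-sample bad event relative to the empirical measure of the double sample, i.e.\ $\lvert\wh\mu_Z(f)-\wh\mu_{Z'}(f)\rvert>\tfrac{\epsilon}{2}\max\{\wh\mu_{Z\cup Z'}(f),p\}$, prove the tail in that form, and then add a separate step transferring control from $\wh\mu_{Z\cup Z'}$ back to $\mu$; this transfer is exactly the fiddly part of relative-deviation proofs and is missing from your sketch. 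Second, the naive union bound over the $(2n)^{O(m)}$ traces costs $m\log n$, not $m\log(1/p)$; solving $\epsilon^{2}pn\gtrsim m\log n+\log(1/\delta)$ self-consistently leaves extra $m\log(1/\epsilon)$ and $m\log m$ terms, so your argument proves a slightly weaker sample bound than the one claimed. Obtaining the clean $m\log(1/p)$ of Har-Peled--Sharir requires the finer analysis of Li--Long--Srinivasan (or a chaining/peeling over dyadic weight classes rather than a single union bound). The weaker bound you would get is still of the right flavor and would suffice for most of the paper's uses, but it is not the proposition as stated.
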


\subsection{Uniform convergence of values}

We now prove a concentration result for the value of an optimal solution among the retained items. 
Unlike the number of retained items, the value of an optimal solution corresponds to a more complex random variable, and analyzing the concentration of its empirical estimate requires more advanced techniques.

We denote by $V^{T}(C)$ the value of the optimal solution among the items retained by the thresholds-policy $T$, and we denote its expectation by $\nu^T=\Ex_{C\sim D^n} \big[V^T(C) \big]$. We show that $V^{T}(C)$ is concentrated uniformly for all thresholds policies.
%

\begin{theorem}[Uniform convergence of values] \label{thm:ucvalues}
With probability at least $1-\delta$ over $C\sim D^n$, the following holds for all policies~$T \in \T$ simultaneously:
\[\big\lvert \nu^{ T} -  V^{ T}(C) \big\rvert \leq \eps k, \;\; \text{where} \;\; \eps = O\Biggl(\sqrt{\frac{d\log {k} + \log(1/\delta)}{k}}\Biggr).\]
\end{theorem}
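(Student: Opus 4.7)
The plan is to combine a single-policy concentration inequality with a covering argument over $\T$. For a fixed $T$, I aim to establish $|V^T(C)-\nu^T|=O(\sqrt{k\log(1/\delta)})$ via Talagrand-type concentration, and then pass to uniformity via a net $\mathcal{N}\subseteq \T$ with $\log|\mathcal{N}|=O(d\log k)$. A union bound then delivers the stated rate $\epsilon=O(\sqrt{(d\log k+\log(1/\delta))/k})$.

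For a fixed $T$, the random variable $V^T(C)$ has two structural features: it is $1$-Lipschitz in $C$ (replacing one sample changes the optimal matching value by at most one edge of weight in $[0,1]$), and $V^T(C)\le k$ deterministically. Moreover, $V^T(C)=\sup_M\sum_{j=1}^{n}g_{M,T}(c_j)$ where $M$ ranges over feasible $k$-matchings and $g_{M,T}(c_j)\in[0,1]$ is the value contributed by $c_j$ in $M$ (zero if $c_j$ is not used). Thus $V^T(C)$ is the supremum of sums of nonnegative bounded random variables, and Talagrand's/Bousquet's inequality for such suprema gives $\Pr\bigl[|V^T(C)-\nu^T|>t\bigr]\le 4\exp(-ct^2/k)$, with variance proxy $\sigma^2\le\sup_M\mathbb{E}[\text{value of }M]\le k$. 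Hence $|V^T(C)-\nu^T|=O(\sqrt{k\log(1/\delta)})$ with probability at least $1-\delta$ for each fixed $T$ --- the crucial improvement over McDiarmid, which would only give the weaker $\sqrt{n}$ rate.

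For the covering, it suffices to discretize the ``informative'' part of $\T$. For coordinates with $\rho_i^T\gg k$, the optimal matching on $R^T(C)$ uses only the top $O(k)$ items of $R_i^T(C)$, whose $v_i$ values lie well above $t_i$; by the concentration of $|R_i^T(C)|$ around $\rho_i^T$ from \Cref{thm:ucretainedper}, raising $t_i$ until $\rho_i^T=\Theta(k)$ perturbs $V^T(C)$ by at most $O(\epsilon k)$ with high probability. On the restricted range $\rho_i\in[0,O(k)]$, I parameterize $T$ by $(\rho_1,\ldots,\rho_d)$ and grid each coordinate at resolution $\epsilon k/d$, yielding a net $\mathcal{N}$ of size $(O(d/\epsilon))^d$; plugging in the target $\epsilon$ gives $\log|\mathcal{N}|=O(d\log k)$. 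A union bound of the single-$T$ Talagrand bound over $\mathcal{N}$ then gives uniform convergence on $\mathcal{N}$. Extension to arbitrary $T\in\T$ uses monotonicity: any $T$ is sandwiched componentwise between some $T_-,T_+\in\mathcal{N}$, so $V^{T_+}(C)\le V^T(C)\le V^{T_-}(C)$ and $\nu^{T_+}\le\nu^T\le\nu^{T_-}$, and the slacks $V^{T_-}(C)-V^{T_+}(C)$ and $\nu^{T_-}-\nu^{T_+}$ are each bounded by $|R^{T_-}(C)\setminus R^{T_+}(C)|$ and $\rho^{T_-}-\rho^{T_+}$ respectively, both of which are $O(\epsilon k)$ (by construction of the net together with \Cref{thm:ucretained}).

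The main obstacle I anticipate is formalizing the single-$T$ Talagrand step: one must select a version of the inequality (for suprema of empirical processes or for self-bounding functions) that yields a variance proxy of order $\nu^T\le k$ rather than of order $n$, which is precisely the source of the $\sqrt{k}$ scale. A secondary subtlety is the reduction to $\rho_i=O(k)$: since a single item may satisfy several properties, raising $t_i$ can remove items from $R^T(C)$ that were matched in the optimal matching to some other constraint $j\ne i$. The clean deterministic bound $V^T(C)-V^{T^+}(C)\le|R^T(C)\setminus R^{T^+}(C)|$ (``each removed item costs at most one unit of matching value''), together with \Cref{thm:ucretained,thm:ucretainedper} to control these deletions uniformly, absorbs this issue; it is the most delicate piece of bookkeeping. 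The final interpolation step is then combinatorially routine.
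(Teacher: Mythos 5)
Your overall architecture is the same as the paper's: a single-policy concentration bound with variance proxy $k$ (Talagrand-type), a finite net of thresholds-policies of size $k^{O(d)}$ that sandwiches every $T\in\T$ pointwise, a union bound over the net, and an interpolation step. The single-$T$ step is essentially right, with one caveat: writing $V^T(C)=\sup_M\sum_j g_{M,T}(c_j)$ is not literally a supremum of an empirical process over a \emph{fixed} function class (the feasible matchings $M$ and the cardinality constraints are sample-dependent), so Bousquet's inequality does not apply off the shelf; the clean routes are either the self-bounding property you mention (remove one item and the value drops by at most its contribution, and the drops sum to $V^T(C)\le k$) or, as the paper does in \cref{lem:single}, the certificate form of Talagrand's inequality with $g_i(C)=1_{[c_i\in S(C)]}$, which gives $\sum_i g_i^2=k$ directly and then transfers to every fixed $T$ by viewing the retained items as i.i.d.\ draws from the conditioned distribution.

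The genuine gap is in your reduction to the range $\rho_i^T=O(k)$. You claim that raising $t_i$ until $\rho_i^T=\Theta(k)$ changes $V^T(C)$ by only $O(\eps k)$, on the grounds that the optimal matching uses only the top $O(k)$ items of $R_i^T(C)$ by $v_i$. This is false precisely because of the cross-assignment issue you flag: take $d=2$, $k_1=k_2=k/2$, every item in both properties, $v_2(c)=1-v_1(c)$, and $T=(t_1,t_2)$ with $t_1$ near $0$ and $t_2=1$. Then $V^T(C)\approx k$ (the $P_2$-half of the matching is filled by items retained only via the low threshold $t_1$ but having $v_1\approx 0$, $v_2\approx 1$), whereas after raising $t_1$ so that only $\Theta(k)$ items survive, every retained item has $v_2\approx 0$ and the value drops to $\approx k/2$, a loss of $\Omega(k)$. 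Your fallback does not rescue this: the deterministic bound $V^T(C)-V^{T^+}(C)\le\lvert R^T(C)\setminus R^{T^+}(C)\rvert$ is vacuous here because the deleted set has size $\Theta(\rho_i^T)$, which can be $\Theta(n)$, and \cref{thm:ucretained,thm:ucretainedper} only say this large quantity concentrates around its (large) mean -- they do not make it small. Note that the paper's \cref{lem:packing} treats this regime by a different mechanism: it never truncates a low threshold upward, but instead puts the threshold $0$ into each ${\cal N}_i$ so that the value-increasing bracket $T^+$ keeps retaining everything in $P_i$, and then bounds the \emph{expected} gap $\nu^{T^+}-\nu^{T^-}$ by a constant via a swap argument combined with a Chernoff bound (if at least $k$ items of a high-mass property pass the $T^-$-threshold, an optimal solution need not use any item below it), while the fine grid at mass resolution $1/(dn)$ handles the low-mass coordinates; the final interpolation then needs only this expectation gap together with concentration at the two bracket endpoints, not an empirical gap bound. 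Handling the coordinates with $\rho_i^T\gg k$ -- exactly the items retained via one property but matched to another -- is the crux of the uniformity argument, and your proposal as written does not have a working mechanism for it.
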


Note that unlike most uniform convergence results that guarantee simultaneous convergence of empirical averages to expectations, here $V^{T}(C)$ is not an average of the $n$ samples, but rather a more complicated function of them.
We also note that a bound of $\wt{O}(\sqrt{n})$ (rather than $\wt{O}(\sqrt{k})$)  on the additive deviation of $V^T(C)$ from
its expectation can be derived using the  McDiarmid's inequality~\citep{Mcdiarmid1989}.
However, this bound is meaningless when $\sqrt{n} > k$  (because $k$ upper bounds the value of the optimal solution).
We use Talagrand's concentration inequality~\citep{Talagrand1995} to
 derive the $O(\sqrt{k})$ upper bound on the additive deviation. Talagrand's concentration inequality allows us to utilize the fact that an optimal solution uses only $k \ll n$ items, and therefore replacing an item that does not participate in the solution does not affect its value.

To prove the theorem we need the following concentration inequality for the value of the optimal selection in hindsight. Note that by  \cref{thm:thresopt} this value equals to $V^{ T}(C)$ for some $T$.

\begin{lemma}\label{lem:single}
Let $\opt(C)$ denote the value of the optimal solution for a sample $C$. We have that
\[
\Pr_{ C\sim D^n}\bigl[\lvert \opt(C) - \Ex[\opt(C)] \rvert \geq \alpha\bigr]\leq 2\exp(-{\alpha^2/2k}).
\]
\end{lemma}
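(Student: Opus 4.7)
The plan is to apply Talagrand's concentration inequality, viewing $\opt: X^n \to [0,k]$ as a function $f(c_1,\ldots,c_n)$ of the $n$ i.i.d.\ samples. Two structural properties of $f$ make it amenable to Talagrand's bound. First, $f$ is $1$-Lipschitz in each coordinate: since every value $v_i(c)$ lies in $[0,1]$ and an item contributes to at most one edge in any feasible matching, replacing a single item $c_i$ by any other item changes the optimal matching value by at most $1$ (in the worst case the item belonged to the optimum and is replaced by a dummy of value $0$, or conversely a new item of value at most $1$ enters the optimum). Second, $f$ is $k$-certifiable: for every realization $C$, the set $S^*(C)$ of $k$ coordinates participating in the optimal solution serves as a witness, because for any $C'$ that agrees with $C$ on these $k$ positions, the same $k$ items still form a feasible solution of value $\opt(C)$ in $C'$, so $\opt(C') \geq \opt(C)$.

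Given these two properties, Talagrand's inequality for $1$-Lipschitz, $k$-certifiable functions yields concentration of the form $\Pr\bigl[\lvert \opt(C) - M \rvert \geq t\bigr] \leq c_1 \exp\bigl(-c_2 t^2/k\bigr)$, where $M$ is the median of $\opt(C)$ and $c_1, c_2$ are absolute constants. To pass from a median-based to the mean-based statement in the lemma, I would integrate the tail to deduce $\lvert \Ex[\opt(C)] - M \rvert = O(\sqrt{k})$, and then absorb this additive shift into $\alpha$; this degrades the constants only slightly, and the bound is in any case vacuous once $\alpha = O(\sqrt{k})$.

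The main obstacle is matching the precise constants in the stated form $2\exp(-\alpha^2/(2k))$. The textbook version of Talagrand for certifiable configurations typically produces something like $4\exp(-t^2/(4k))$, so attaining the cleaner constant $1/(2k)$ in the exponent requires either invoking a sharper variant (for instance Talagrand's convex distance inequality applied to $\opt$ written as a supremum of linear functionals indexed by feasible matchings) or slightly tightening the argument and re-absorbing the discrepancy. The substantive technical point, however, is the $k$-certifiability step: it crucially exploits that an optimal solution uses only $k$ items, so perturbations in the remaining $n-k$ coordinates cannot decrease the value. This is precisely what allows Talagrand's inequality to replace the naive $n$ in the denominator given by McDiarmid's bounded-differences inequality with the much smaller effective dimension $k$.
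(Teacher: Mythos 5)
Your proposal is correct in substance and rests on the same core insight as the paper: the optimum is witnessed by only $k$ coordinates, so Talagrand-type concentration replaces McDiarmid's $n$ by $k$. The difference is in which formulation of Talagrand you invoke. You use the ``$1$-Lipschitz, $k$-certifiable'' corollary, which concentrates $\opt(C)$ around its \emph{median} with constants like $4\exp(-t^2/4k)$, and then you must convert median to mean by an $O(\sqrt{k})$ shift, so you obtain the lemma only up to absolute constants rather than in the exact stated form $2\exp(-\alpha^2/2k)$ --- a discrepancy you correctly flag, and which is harmless downstream since \cref{thm:ucvalues} only needs a tail of the form $\exp(-\Omega(\alpha^2/k))$ to survive a union bound over $k^{O(d)}$ policies. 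The paper instead uses the mean-based functional version of Talagrand's inequality (stated in the paper, following van Handel's notes): if $f(x)-f(y)\le \sum_i g_i(x)1_{[x_i\neq y_i]}$ then $\Pr[\lvert f-\Ex f\rvert>\alpha]\le 2\exp\bigl(-\alpha^2/(2\sup_x\sum_i g_i^2(x))\bigr)$. Taking $g_i(C)=1_{[c_i\in S(C)]}$ with $S(C)$ a fixed optimal solution, the condition is verified by exactly your certifiability argument (keep the agreeing items of $S$, replace the rest by dummies), and $\sup_C\sum_i g_i^2(C)=k$ gives the stated constant directly, with no median-to-mean step and no separate Lipschitz hypothesis. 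So your route works, but the weighted-$g_i$ formulation is the cleaner instrument here and is what lets the paper state the lemma with the precise constant.
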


So, for example, it happens that $\lvert \opt(C) - \Ex[\opt(C)] \rvert \le \sqrt{2k \log(2/\delta)}$ with probability at least $1-\delta$.

To prove this lemma we use
the following version of Talagrand's inequality (that appears for example in lecture notes by~\citet{van2014probability}).

\begin{proposition}[Talagrand's Concentration Inequality]
Let $f:\R^n \mapsto \R$ be a function, and suppose that there exist $g_1,\ldots,g_n : \R^{n} \mapsto \R$ such that for any $x,y\in \R^n$
\begin{equation}
\label{eq:talagrandassumption}
f(x) - f(y) \le \sum_{i=1}^n g_i(x) 1_{[x_i \neq y_i]}.
\end{equation}
Then, for independent random variables $X = (X_1,\ldots,X_n)$ we have
\[
\Pr \left[ \lvert f(X) - \Ex[f(X)] \rvert > \alpha \right] \le 2 \exp \left(-\frac{\alpha^2}{2 \sup_x \sum_{i=1}^n g_i^2(x)} \right).
\]
\end{proposition}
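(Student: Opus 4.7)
The plan is to prove the inequality via the entropy method combined with Herbst's argument, which is the standard modern route for deriving sub-Gaussian tail bounds on functions of independent random variables. By Chernoff's bound it suffices to establish that $H(\lambda) := \log\Ex\bigl[e^{\lambda(f(X) - \Ex[f(X)])}\bigr] \leq \lambda^{2} V /2$ for all $\lambda > 0$, where $V := \sup_x \sum_i g_i^2(x)$. Then $\Pr[f(X) - \Ex[f(X)] > \alpha] \leq e^{-\lambda \alpha + \lambda^{2} V/2}$ optimized at $\lambda = \alpha/V$ yields the one-sided bound $\exp(-\alpha^{2}/(2V))$. The lower tail follows by an analogous argument: swapping the roles of $x$ and $y$ in the hypothesis gives $f(y) - f(x) \leq \sum_i g_i(y)\, \mathbf{1}_{x_i \neq y_i}$, and a symmetric application of the same method controls $\Pr[f(X) - \Ex[f(X)] < -\alpha]$; a union bound produces the factor of $2$.

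The central analytical step is to bound $H(\lambda)$ through entropy. Applying the tensorization inequality for the functional $\mathrm{Ent}(F) = \Ex[F \log F] - \Ex[F]\log\Ex[F]$ on a product measure,
\[\mathrm{Ent}\bigl(e^{\lambda f(X)}\bigr) \;\leq\; \sum_{i=1}^n \Ex\bigl[\mathrm{Ent}_i\bigl(e^{\lambda f(X)}\bigr)\bigr],\]
where $\mathrm{Ent}_i$ denotes the entropy computed with respect to $X_i$ alone, with the remaining coordinates fixed. One then establishes the local log-Sobolev-type inequality
\[\mathrm{Ent}_i\bigl(e^{\lambda f(X)}\bigr) \;\leq\; \frac{\lambda^{2}}{2}\,\Ex_i\bigl[g_i^{2}(X)\, e^{\lambda f(X)}\bigr]\]
by a duplication argument: introducing an independent copy $X_i'$ of $X_i$, the difference $f(X_i, X_{-i}) - f(X_i', X_{-i})$ is bounded by $g_i(X_i, X_{-i})$ via the hypothesis, and an elementary inequality for the convex function $u \mapsto u\log u$ transfers this range bound into the quadratic coefficient $g_i^{2}(X)$. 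Summing over $i$ and invoking $\sum_i g_i^{2}(x) \leq V$ yields $\mathrm{Ent}(e^{\lambda f(X)}) \leq (\lambda^{2} V/2)\,\Ex[e^{\lambda f(X)}]$.

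Herbst's argument then converts the entropy bound into the required bound on $H(\lambda)$. A direct computation gives $\mathrm{Ent}(e^{\lambda f(X)}) = \bigl(\lambda H'(\lambda) - H(\lambda)\bigr)\Ex[e^{\lambda f(X)}]$, so the inequality above is equivalent to the differential inequality $\tfrac{d}{d\lambda}\bigl(H(\lambda)/\lambda\bigr) \leq V/2$. Integrating from $0$ to $\lambda$ and using $\lim_{\mu \to 0^+} H(\mu)/\mu = \Ex[f(X)]$ yields $H(\lambda) \leq \lambda \Ex[f(X)] + \lambda^{2} V/2$, from which the desired bound on the cumulant generating function, and hence on the tail, follows immediately.

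The main obstacle will be the local log-Sobolev step in the middle paragraph: the hypothesis is only a one-sided Lipschitz-type bound with a data-dependent coefficient $g_i(x)$, so the coarse range-based entropy bound (which would give a coefficient of order $(\sup_{x_i} g_i)^{2}$) is not sharp enough. The careful symmetrization between two i.i.d.\ copies of $X_i$ is precisely what preserves the correct pointwise coefficient $g_i^{2}(X)$ rather than a supremum over $x_i$. Once this local inequality is in place, tensorization and Herbst's integration proceed by well-known routines.
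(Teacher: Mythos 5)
First, note that the paper does not actually prove this proposition; it is quoted as a known result from van Handel's lecture notes, so there is no internal proof to compare against. Judged on its own merits, the first half of your argument is sound: the tensorization of entropy, the local bound $\mathrm{Ent}_i(e^{\lambda f})\le \tfrac{\lambda^2}{2}\Ex_i[g_i^2(X)e^{\lambda f(X)}]$ for $\lambda>0$ (using $0\le f(x)-\inf_{x_i'}f(x_1,\ldots,x_i',\ldots,x_n)\le g_i(x)$ and $e^{-u}+u-1\le u^2/2$ for $u\ge 0$), and Herbst's integration do give $\Pr[f(X)-\Ex[f(X)]>\alpha]\le \exp(-\alpha^2/2V)$ with $V=\sup_x\sum_i g_i^2(x)$; this is the standard Boucheron--Lugosi--Massart route.

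The genuine gap is the lower tail. Your claim that it follows by ``a symmetric application of the same method'' after swapping $x$ and $y$ does not work, because the hypothesis is intrinsically one-sided: swapping gives $f(y)-f(x)\le \sum_i g_i(y)1_{[x_i\neq y_i]}$, with the coefficient evaluated at the \emph{other} point. Running your entropy argument for $h=-f$ (equivalently, for $\lambda<0$) requires, in the local step, a bound of the form $\sup_{x_i'}f(x_1,\ldots,x_i',\ldots,x_n)-f(x)\le G_i(x)$ with $\sum_i G_i^2(x)\le V$; the hypothesis only yields $G_i(x)=\sup_{x_i'}g_i(x_1,\ldots,x_i',\ldots,x_n)$, and $\sum_i \sup_{x_i'}g_i(\cdot)^2$ can exceed $V$ by a factor of order $n$ (e.g.\ on the hypercube take $g_i(x)=c\cdot 1_{[x=e_i]}$: then $\sum_i g_i^2(x)\le c^2$ everywhere, yet at $x=0$ every single-coordinate resampling activates a coefficient $c$, giving $nc^2$). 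So the quadratic coefficient you need is not preserved, and the entropy method under this hypothesis delivers only the upper tail in sub-Gaussian form. The two-sided statement is nevertheless true, but the known proofs obtain it differently: from Talagrand's convex distance inequality, using that the hypothesis implies $f(x)\le \sup_{A}f+\sqrt{V}\,d_T(x,A)$ for any sublevel set $A=\{f\le a\}$, which controls both tails (first around a median, then around the mean, at the cost of constants). To repair your write-up you should either prove the lower tail via the convex distance inequality, or restrict the claim to the upper tail --- which, incidentally, is the only direction the paper actually uses in the proof of its Lemma 9, where the argument constructs a feasible solution $S'$ for $C'$ from the optimal $S$ for $C$ and only needs $\opt(C)-\opt(C')$ bounded above.
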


\begin{proof}[Proof of \cref{lem:single}]
We apply Talagrand's concentration inequality to the random variable $\opt(C)$.
Our $X_i$'s are the items $c_{1},\ldots,c_{n}$ in the order that they are given.
We  show that \cref{eq:talagrandassumption} holds for $g_i(C) = 1_{[c_i \in S]}$ where $S=S(C)$ is a fixed optimal solution for~$C$ (we use
some arbitrary tie breaking among optimal solutions).
We then have, $\sum_{i=1}^n g_i^2(C) = \lvert S \rvert = k$, thus completing the proof.

Now, let $C$, $ C'$ be two samples of $n$ items.
Recall that we need to show that
\[
\opt(C) - \opt(C') \le \sum_{i=1}^n g_i( C) 1_{[ c_i \neq  c'_i]}~.
\]
We use $S$ to construct a solution $S'$ for $ C'$ as follows.
Let $S_j \subseteq S$ the subset of $S$ matched to $P_{j}$. For each $i$,  if
$c_{i} \in S_{j}$ for some $j$, and
$ c_i =  c_i'$, then we add $i$ to $S'_j$.
Otherwise, we add a dummy item from $C_\text{dummy}'$ to $S'_j$ (with value zero).
Let $V(S')$ denote the value of $S'$.
Note that the difference between the values of $S$ and $S'$ is the total value of all items $i \in S$ such that $ c_i \neq  c'_i$. Since the item values are bounded in $[0,1]$ we get that
\[
    \opt( C) - V(S')
    =
    \sum_{j=1}^{d} \sum_{c_i \in S_{j}} v_{j}( c_i) 1_{[ c_i \neq  c'_i]}
    \le
    \sum_{j=1}^{d} \sum_{c_i \in S_{j}} 1_{[ c_i \neq  c'_i]}
    =
    \sum_{i=1}^n g_i( C) 1_{[c_i \neq  c'_i]}~.
\]
The proof is complete by noticing that $\opt( C') \ge V(S')$.
\end{proof}

We also require the following construction of a bracketing of $\T$ which is formally presented in \cref{sec:generalization}.

\begin{lemma} \label{lem:packing}
There exists a collection of ${\cal N}$ thresholds-policies such that $\lvert {\cal N}\rvert \leq k^{O(d)}$,
and for every thresholds-policy $ T\in \T$ there are ${ T}^+,{ T}^-\in {\cal N}$ such that
\begin{enumerate}
\item
\label{item:1}
 $V^{ T^-}(C)\leq V^{ T}(C)\leq V^{ T^+}(C)$ for every sample of items $C$;
note that by taking expectations this implies that  $ \nu^{ T^-}\leq \nu^{ T}\leq  \nu^{ T^+}$, and
\item
\label{item:2}
$\nu^{ T^+} - \nu^{ T^-} \leq {10}$.
\end{enumerate}
\end{lemma}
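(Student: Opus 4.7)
The plan is to construct $\mathcal{N}$ as a Cartesian product grid over the threshold axes. For each property $i$, discretize along the marginal $F_{i}(t)=\Pr_{c\sim D}[c\in P_{i}\text{ and }v_{i}(c)\geq t]$ by picking thresholds $\tau_{i,0}>\tau_{i,1}>\cdots>\tau_{i,m_{i}-1}$ with $\rho_{i}^{\tau_{i,j}}=nF_{i}(\tau_{i,j})=j\Delta$ for $j=0,1,\ldots$, where $\Delta=c_{0}/d$ for a small constant $c_{0}$, continuing until the first index at which $\rho_{i}^{\tau_{i,j}}$ reaches a large multiple $Ck$ of $k$, and finally append the sentinel $\tau_{i,m_{i}}=0$ (which retains every item of $P_{i}$). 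This yields $|\mathcal{T}_{i}|=O(kd)$, so taking $\mathcal{N}=\mathcal{T}_{1}\times\cdots\times\mathcal{T}_{d}$ gives $|\mathcal{N}|\leq O(kd)^{d}=k^{O(d)}$ under the assumption $d\ll k$.

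Given any $T=(t_{1},\ldots,t_{d})\in\T$, define $t_{i}^{+}$ (resp.\ $t_{i}^{-}$) as the largest (resp.\ smallest) element of $\mathcal{T}_{i}$ with $t_{i}^{+}\leq t_{i}$ (resp.\ $t_{i}^{-}\geq t_{i}$), and let $T^{+},T^{-}\in\mathcal{N}$ denote the resulting coordinate-wise roundings. Since $t_{i}^{+}\leq t_{i}\leq t_{i}^{-}$ for every $i$, we have $R^{T^{+}}(C)\supseteq R^{T}(C)\supseteq R^{T^{-}}(C)$ for every sample~$C$, and the monotonicity of the maximum-weight matching value in the allowed item set yields $V^{T^{-}}(C)\leq V^{T}(C)\leq V^{T^{+}}(C)$, which is Part~(1); taking expectations gives $\nu^{T^{-}}\leq\nu^{T}\leq\nu^{T^{+}}$.

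For Part~(2), I bound $\Ex[V^{T^{+}}(C)-V^{T^{-}}(C)]$ by a swap construction. Let $S^{+}$ be the optimal $T^{+}$-matching on $C$, partitioned by property as $S^{+}=\bigsqcup_{i}S^{+}_{i}$, and let $B=S^{+}\setminus R^{T^{-}}(C)$. Construct a $T^{-}$-feasible matching $S^{*}$ by replacing each bad item $c\in B\cap S^{+}_{i}$ either with an item $c'\in R^{T^{-}}_{i}\setminus S^{+}$ (when available) or with a dummy item. For truncated coordinates (where $\rho_{i}^{t_{i}^{-}}=Ck$), Chernoff applied to $|R^{T^{-}}_{i}(C)|$ shows $|R^{T^{-}}_{i}|\geq 2k$ with probability $1-e^{-\Omega(k)}$, leaving at least $k\geq |S^{+}|$ candidates in $R^{T^{-}}_{i}\setminus S^{+}$; the swap is thus always possible, and since $c\in P_{i}$ and $c\notin R^{T^{-}}$ force $v_{i}(c)<t_{i}^{-}\leq v_{i}(c')$, it does not decrease value (so the loss from truncated coordinates is $0$ on the concentration event, contributing at most $k\cdot d\cdot e^{-\Omega(k)}=o(1)$ in expectation overall). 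For non-truncated coordinates, the expected total number of bad items matched to them is $O(d\Delta)$ via the per-coordinate bounds $\rho_{i}^{t_{i}^{+}}-\rho_{i}^{t_{i}^{-}}\leq\Delta$, and replacing each such item by a dummy costs at most $1$. Summing, $\nu^{T^{+}}-\nu^{T^{-}}=O(d\Delta)+o(1)$, which is at most $10$ for a suitably small choice of $c_{0}$.

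The main technical obstacle lies in the swap accounting, because an item $c\in S^{+}$ matched to coordinate $i$ may actually have been retained by $T^{+}$ via a \emph{different} coordinate $j\neq i$ (with $v_{j}(c)\geq t_{j}^{+}$ but $v_{i}(c)<t_{i}^{+}$). This mismatch forces bad items to be attributed to the \emph{matched} coordinate when performing swaps yet to the \emph{retaining} coordinate when bounding their count against $\rho_{i}^{t_{i}^{+}}-\rho_{i}^{t_{i}^{-}}\leq\Delta$; closing this loop without double-counting, and simultaneously invoking Chernoff over all truncated coordinates by a union bound that is absorbed in $o(1)$ under $k\gg\log d$ (consistent with $d\ll k$), is the technical heart of the argument.
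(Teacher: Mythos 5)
Your construction and rounding are essentially the ones used in the paper (a product grid with per\-/coordinate mass increments of order $1/d$ up to roughly $10k$ expected items, a zero sentinel, coordinate-wise rounding for Part~1, and for Part~2 a split into ``truncated/heavy'' coordinates handled by Chernoff plus swaps and ``non-truncated/light'' coordinates handled by the slab bound $\rho_i^{t_i^+}-\rho_i^{t_i^-}\leq\Delta$). However, the step you yourself flag as ``the technical heart'' is not a loose end that careful double-counting can close: it is a genuine gap, and with the net and rounding you propose the asserted bound is false. The problem is exactly the attribution mismatch you describe. A bad item matched to a light coordinate $i$ need not lie in the thin slab $[t_i^+,t_i^-)$ of coordinate $i$; it may be retained (under $T$ and $T^+$) only via a truncated coordinate $j$, whose slab $[t_j^+,t_j^-)=[0,t_j^-)$ has unbounded mass, while no swap inside $R_i^{T^-}$ is available because $\lvert R_i^{T^-}\rvert<k_i$. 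Concretely, take $d=2$, $k_1=k_2=k/2$; let $P_1$-only items (a constant fraction of $D$) have $v_1$ near $1$, let $P_1\cap P_2$ items (another constant fraction) have small $v_1$ and moderate $v_2$ (say around $1/2$), and let $P_2$-only items have $v_2\geq 0.9$ but total mass only about $k/(4n)$. For $T=(t_1,t_2)$ with $t_1$ tiny (so $T$ retains all of $P_1$) and $t_2=0.9$, the roundings give $t_1^+=0$, $t_1^-$ at the mass-$\Theta(k)/n$ quantile of $P_1$ (a value near $1$), and $t_2^\pm\approx 0.9$. Then the optimal solution under $T^+$ (and under $T$ itself) fills about $k/4$ of property $2$'s slots with $P_1\cap P_2$ items of value about $1/2$; none of these are retained by $T^-$ (their $v_1$ is far below $t_1^-$ and their $v_2$ is below $t_2^-$), and $R_2^{T^-}$ has only about $k/4<k_2$ items, so no value-preserving swap exists. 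Hence $\nu^{T^+}-\nu^{T^-}=\Theta(k)$, not $O(d\Delta)+o(1)$: the expected number of bad items matched to the light coordinate is $\Theta(k)$, so your per-coordinate accounting for non-truncated coordinates cannot be salvaged by any bookkeeping.

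For comparison, the paper's argument partitions the lost items $Z$ by the property they are \emph{assigned} to, bounds the heavy-property part exactly as you do (Chernoff on $\lvert R_i^{T^-}\rvert$ plus a swap/exchange argument exploiting canonical optimality), and bounds the light-property part by asserting the containment $E\subseteq\bigcup_i G_i$ over light $i$, where $G_i$ is the slab $[t_i^+,t_i^-)$ of coordinate $i$ --- which is precisely the attribution step you identified, since an item assigned to a light property may be retained only through a heavy one. So your proposal reproduces the paper's route, but the missing piece is real: closing it requires changing the construction or the rounding (for instance, ensuring that the downward-rounded policy on heavy coordinates still retains whatever such a coordinate can contribute to \emph{other} properties' slots), not merely a more careful union bound or avoidance of double-counting. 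As written, Part~2 of your argument is unproven, and the specific quantitative claim it rests on fails for the proposed net.
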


\begin{proof}[Proof of \cref{thm:ucvalues}]
The items in $C$ that are retained by $T$ are independent samples from a distribution $D'$ that is sampled as follows:
(i) sample $c\sim D$, and (ii) if $c$ is retained by $ T$ then keep it, and otherwise discard it.
This means that $v^T(C)$ is in fact the optimal solution of $C$ with respect to $D'$.
Since \Cref{lem:single} applies to \emph{every} distribution $D$ we can apply it to $D'$ and get that
for any fixed $T\in {\cal T}$
\[
\Pr_{ C\sim D^n}\bigl[\lvert \nu^{ T} -  V^{ T}(C) \rvert \geq \alpha\bigr]\leq 2\exp(-{\alpha^2/2k})\ .
\]

Now, by the union bound for  ${\cal N}$ be as in \cref{lem:packing} we get  that the probability that there is $ T\in {\cal N}$ such that $\lvert \nu^{ T} - V^{T}(C) \rvert \geq \alpha$ is at most
$
    \lvert{\cal N}\rvert \cdot  2 \exp(-\alpha^2/2k)
$.
Thus, since $\lvert {\cal N}\rvert\leq k^{O(d)}$, it follows that
with probability at least $1-\delta$,
\begin{equation}\label{eq:net}
(\forall T\in {\cal N}): ~ \lvert \nu^{ T} - V^{ T}(C) \rvert \leq O\Bigl(\sqrt{k\bigl(d\log {k} + \log(1/\delta)\bigr)}\Bigr)\ .
\end{equation}

We now show why uniform convergence for ${\cal N}$ implies uniform convergence for $\T$.
Combining Lemma \ref{lem:packing} with \Cref{eq:net} we get that with probability at least $1-\delta$,
every~$T\in \T$ satisfies:
\begin{align}
\lvert  \nu^{ T} - V^{ T}(C) \rvert
    &\leq
\max\{ \lvert  \nu^{ T^+} -  V^{ T^-}(C) \rvert, \lvert  \nu^{ T^-} - V^{ T^+}(C) \rvert \} \nonumber
\tag{by \cref{item:1} of \cref{lem:packing}}\\
    &\leq
\max\{ \lvert  \nu^{ T^-} - V^{ T^-}(C) \rvert, \lvert  \nu^{ T^+} -  V^{ T^+}(C) \rvert \} + {10}\nonumber
\tag{by \cref{item:2} of \cref{lem:packing}}\\
    &\leq {10} + O\Bigl(\sqrt{k\bigl(d\log {k} + \log(1/\delta)\bigr)}\Bigr). \tag{by \cref{eq:net}} \nonumber
\end{align}
Here the first inequality follows from Item 1 by noticing that if $[a,b]$, $[c,d]$ are intervals on the real line and $x \in [a,b]$, $y \in [c,d]$ then $|x-y| \le \max \{|b-c|,|d-a|\}$, and plugging in $x=\nu^{ T}, y= V^T(C), a=\nu^{ T^-},b=\nu^{ T^+},c=V^{T^-}(C), d= V^{T^+}(C)$.

This finishes the proof, by setting $\eps$ such that
$
\eps\cdot k = O\bigl(\sqrt{k (d\log {k} + \log(1/\delta) )} \bigr)
$.
\end{proof}

\section{Algorithms based on learning thresholds-policies}\label{sec:thresalg}

We next exemplify how one can use the above properties of thresholds-policies
to design algorithms.
A natural algorithm would be to use the training set to learn a threshold-policy $ T$ that retains an optimal solution with $k$ items from the training set as specified  in \Cref{thm:thresopt}, and then use this online policy to retain a subset of the $n$ items in the first phase.
\Cref{thm:ucretained}  and \Cref{thm:ucvalues} imply that with  probability $1-\delta$, the number of retained items is at most $m=k +   O\bigl(\sqrt{kd \log (d) \log(n/k) + k \log(1/\delta)} \bigr)$ and  that the value of the resulting solution is at least~$\opt - O\bigl(\sqrt{kd \log k + k \log(1/\delta)}\bigr)$.

We can improve this algorithm by combining it with the greedy algorithm
of \Cref{thm:greedyalg} described in \cref{sec:online}.
During the first phase, we retain an item $c$ only if (i) $c$ is retained by $T$, and
(ii) $c$ participates in the optimal solution among the items that were retained thus far.
\cref{thm:greedyalg} then implies that out of these $m$ items greedy keeps a subset of
\[
    O \left(k\log \frac{m}{k} \right) = O \left(k \left(\log\log \left(\frac{n}{k} \right) +\log\log\left(\frac{1}{\delta} \right) \right) \right).
\]
items in expectation that still contains a solution of value at least $\opt - O(\sqrt{kd \log k + k \log(1/\delta)})$.

We can further improve the value of the solution
and guarantee that it will be optimal (with respect to all $n$ items) with probability $1-\delta$.
This is based on the observation that if the set of retained items contains the top $k$ items of each property $P_i$ then it also contains an optimal solution.
Thus, we can compute a thresholds-policy $T$ that retains the top $k + O(\sqrt{k\log (d) \log (n/k) + k \log(1/\delta)} )$ items of each property
from the training set (if the training set does not have this many items with some property then set the corresponding threshold to $0$).
Then, it follows from  \Cref{thm:ucretainedper}, that with probability $1-\delta$, $T$ will  retain the top~$k$ items of each property in the first online phase and therefore will retain an optimal solution.
Now, \Cref{thm:ucretainedper} implies that with probability $1-\delta$ the total number of items that are retained by $ T$
in real-time is at most $m=dk + O(d\sqrt{k\log (d) \log (n/k) + k \log(1/\delta)} )$.
By filtering the retained elements  with the greedy algorithm of \Cref{thm:greedyalg} as before it follows that the total number of retained items is at most
\[k + k\log\Bigl(\frac{m}{k}\Bigr) =  O \left(k\left( \log d +  \log\log \left( \frac{n}{k} \right) +  \log\log\left(\frac{1}{\delta} \right) \right) \right)\]
with probably $1-\delta$.
This proves \cref{thm:informalthreshalg}.

\section{Lower Bounds} 

\subsection{Necessity of the training phase}\label{sec:lowerboundramsey}

Let $n\in \mathbb{N}$ (sample size) and $\delta\in [0,1]$ (confidence parameter).
In this section we focus on the case where there is no training phase and $d=1$.
Thus, we consider algorithms which get as an input a sequence $v_1,\ldots v_n\in [0,1]$
in an online manner (one after the other). In step $m$ the algorithm needs to decide
whether to retain $v_m$ or to discard it (this decision may depend on the prefix $v_1\ldots,v_{m}$).
The algorithm is not allowed to discard a sample after it has been retained.

The following property captures the utility of the algorithm:
{\it for every distribution $\mu$ over $[0,1]$, if $v_1,\ldots, v_n$ are sampled i.i.d from $\mu$,
then with probability at least $1-\delta$, the algorithm retains $v_{j_{1}},\ldots,v_{j_{k}}$ that are the largest $k$ elements in $v_{1},\ldots, v_{n}$.}
The goal is to achieve this while minimizing the number of retained items in expectation.

\begin{theorem*}[\Cref{thm:ramsey} restatement]
Let $\delta \in (0,1)$.
For every algorithm $A$ which retains the maximal $k$ elements with
probability at least $1-\delta$, there exists a distribution $\mu$
such that the expected number of retained elements for input sequences $v_1\ldots v_n\sim \mu^n$
is at least $\Omega(k \log(\min \{n/k, {k}/{\delta}\}))$.
\end{theorem*}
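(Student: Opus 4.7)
My plan is to use Yao's minimax principle together with an adversarial construction of a hard distribution, matching the greedy upper bound of \cref{thm:greedyalg}. By Yao's principle, it suffices to find, for each deterministic algorithm $A$ with the stated success guarantee, a distribution $\mu_A$ such that $A$ retains $\Omega(k\log\min\{n/k, k/\delta\})$ items in expectation on iid samples from $\mu_A$. Since $A$ is deterministic, it is described by a decision function $\mathrm{Ret}:\bigcup_{i\le n}[0,1]^i\to\{0,1\}$.

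The key observation is that any algorithm whose decision at step $i$ depends only on the relative ranks in $v_1,\ldots,v_i$ has a distribution-independent retention count: for i.i.d.\ samples from any continuous distribution, the relative ranks form a uniformly random permutation. For such ``rank-only'' algorithms, a calculation analogous to the analysis of greedy (see \cref{lem:size}) yields the lower bound even for the uniform distribution on $[0,1]$. Concretely, letting $q_i$ denote the conditional retention probability at step $i$ given that $v_i$ is a top-$k$ element of the prefix, the expected retention equals $\sum_{i\ge k}(k/i)\, q_i$, while the success probability is a related linear functional of the $q_i$ that must exceed $1-\delta$. Optimizing this constrained LP (the minimizer sets $q_i=1$ for $i$ in an ``upper range'' and $0$ below, mirroring the ignore-first-$\delta n/k$ trick) gives the desired $\Omega(k\log\min\{n/k,k/\delta\})$ bound.

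For general value-using algorithms, the plan is a Ramsey-style argument that reduces them to the rank-only case. Consider the family $\{\mu_\phi\}$ parametrized by monotone reparametrizations $\phi:[0,1]\to\R$: each $\mu_\phi$ has the same relative-rank structure as $\mathrm{Unif}[0,1]$ but different actual values. Any ``smart'' behavior of $A$ beyond rank information must be implemented by value-based thresholds, and these are fixed in advance since $A$ is distribution-agnostic. By discretizing $\{\mu_\phi\}$ into a large finite family of piecewise-constant reparametrizations on dyadic intervals and applying pigeonhole, we find a $\phi^*$ such that $A$'s value-based thresholds are ``misaligned'' with $\mu_{\phi^*}$: on $\mu_{\phi^*}$, $A$'s value queries yield essentially the same decisions as a rank-only algorithm would, so the lower bound from the previous paragraph applies.

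The main obstacle is formalizing the Ramsey step quantitatively. The family $\{\mu_\phi\}$ is infinite-dimensional, and the discretization must be chosen so that the pigeonhole simultaneously ``defeats'' $A$'s value-based decisions at all $\Theta(\log\min\{n/k,k/\delta\})$ geometric scales of prefix lengths $n, n/2, n/4,\ldots$; otherwise $A$ could escape the bound by being rank-like at some scales and threshold-like at others. Carefully constructing a product family of reparametrizations (one scale at a time) and iteratively applying a pigeonhole so that the chosen $\phi^*$ defeats $A$ at every scale is the Ramsey-theoretic core of the argument and is where most of the technical work lies.
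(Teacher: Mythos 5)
Your skeleton matches the paper's: reduce to algorithms that look only at relative ranks, then lower-bound the retention of rank-only algorithms by an LP over the conditional retention probabilities $q_m$ (whose optimum sets $q_m=1$ on the last $n(1-\delta/k)$ indices, exactly as in your sketch and dual to \cref{lem:size}). But the reduction to rank-only behavior is precisely the piece you leave open, and your sketch of it does not work as stated. A general algorithm is not ``implemented by value-based thresholds fixed in advance'': its decision at step $m$ may depend on the actual values $v_1,\ldots,v_m$ in an arbitrary way, so a pigeonhole over piecewise-constant reparametrizations against a pre-fixed set of thresholds does not cover it, and ``essentially the same decisions as a rank-only algorithm'' at every prefix scale is exactly the quantitative statement you admit you cannot formalize. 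The paper sidesteps all of this by invoking an existing Ramsey-type theorem (Corollary 3.4 of Moran--Snir--Manber, the cited \emph{moran1985applications}): for every algorithm there is an \emph{infinite} set $W\subseteq[0,1]$ on which its decisions depend \emph{exactly} on the order type of the prefix, for all prefix lengths simultaneously; one then takes $\mu$ uniform on a large finite subset of $W$ so the values are distinct with probability $1-1/n$. So the missing step is the theorem's core, not a technicality, and the scale-by-scale pigeonhole you propose is unlikely to close it.

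Two further gaps. First, the Yao step is not sound here: a randomized algorithm with success probability $\ge 1-\delta$ is a mixture of deterministic algorithms that individually need not satisfy the success guarantee, so ``it suffices to treat deterministic $A$ with the stated guarantee'' does not follow when the hard distribution is allowed to depend on $A$; the paper instead carries the randomness through directly, noting that the retention \emph{probability} of an order-invariant randomized algorithm depends only on the order type. Second, in your LP you assert that the success probability is ``a related linear functional of the $q_i$''; making this precise requires showing that, for order-invariant $A$, the retention probability conditioned on ``$v_m$ is among the top $k$ of the \emph{entire} sequence'' equals $q_m$, which is conditioned on the top $k$ of the \emph{prefix}. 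This identity is the key lemma in the paper's proof, established by computing the conditional distribution of order types under both conditionings and checking they coincide; without it, the constraint $\sum_m q_m \ge n(1-\delta/k)$ that drives your optimization is unjustified. With both the Ramsey reduction and this lemma in place, the rest of your argument does give the stated $\Omega(k\log(\min\{n/k,k/\delta\}))$ bound.
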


We remind that the bound is tight for the greedy algorithm (\cref{thm:greedyalg}).

\begin{proof}

Following~\cite[Corollary 3.4]{moran1985applications}, we may assume that $A$ accesses its input only 
using comparisons. More precisely: call two sequences $v_1,\ldots, v_m$
and $u_1,\ldots, u_m$ {\it order-equivalent} if~$v_i \leq v_j \iff u_i\leq u_j$
for all $i,j\leq m$, and call the equivalence class of $v_1\ldots v_m$ its {\it order-type}. 
Note that if $v_1, \ldots, v_m$ are distinct, then their order-type is naturally identified with a permutation $\sigma\in \mathbb{S}_{m}$.
Call an algorithm $A$ {\it order-invariant} if for every $m\leq n$,
the decision\footnote{When $A$ is randomized then the value of $\Pr[A \text{ retains } v_k]$ depends only on the order-type of $v_1\ldots v_m$.} of $A$ whether to retain $v_m$ depends only on the order-type of $v_1,\ldots, v_m$
(equivalently, $A$ accesses the input only using comparisons).

By~\cite{moran1985applications} it follows that for every algorithm $A$ 
there is an infinite $W\subseteq [0,1]$ such that $A$ is order-invariant
when restricted to input sequences $v_1,\ldots, v_n\in W$.
For the remainder of the proof we fix such an infinite set $W$ 
and focus only on inputs from $W$.

Set $\mu$ to be a uniform distribution over a sufficiently large subset of $W$
so that $v_1\ldots v_n$ are distinct with probability $1-1/n$. Let $\opt(S)$ denote the top $k$ elements in $S$.
Let $T_m$ be the set of all sequences $v_1,\ldots,v_m,\ldots,v_n \in W^n$
such that $v_m \in \opt(\{v_1,\ldots ,v_{m}\})$, and let
$p_k$ denote the probability that $A$ retains $v_m$ conditioned on the input being from $T_m$.
Let $T'_m\subseteq T_m$ denote the set of all sequences $v_1,\ldots,v_m,\ldots,v_n$ such that
$v_m \in \opt(\{v_{1},\ldots,v_{n}\})$ (i.e., $v_{m}$ is part of the optimal solution).
The proof hinges on the following lemma:
\begin{lemma}
Since $A$ is order based, for every $m\leq n$, 
$p_m$ is also the probability that $A$ retains $v_m$ conditioned on the input being from
$T'_m$.
\end{lemma}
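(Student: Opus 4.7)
My plan is to exploit the order-invariance of $A$ together with exchangeability of the i.i.d.\ samples. Since $A$ is order-invariant, the probability that $A$ retains $v_m$ (averaged over $A$'s internal randomness) is a function solely of the order-type $X_m \in \mathbb{S}_m$ of the prefix $(v_1,\ldots,v_m)$. Writing $g(\sigma) := \Pr[A \text{ retains } v_m \mid X_m = \sigma]$, both target conditional probabilities reduce to expectations of the same function:
\[
\Pr[A \text{ retains } v_m \mid T_m] = \Ex\bigl[g(X_m) \mid T_m\bigr], \qquad \Pr[A \text{ retains } v_m \mid T'_m] = \Ex\bigl[g(X_m) \mid T'_m\bigr].
\]
So the lemma reduces to showing that $X_m$ has the same conditional law under $T_m$ as under $T'_m$.

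I would further decompose by conditioning on the rank $r = X_m(m)$ of $v_m$ inside the prefix. By exchangeability of positions $1,\ldots,m-1$, both events $T_m$ and $T'_m$ are invariant under arbitrary permutations of those coordinates, so the conditional law of $X_m$ given $X_m(m) = r$ is the uniform distribution on $\{\sigma \in \mathbb{S}_m : \sigma(m) = r\}$ under either conditioning. Consequently
\[
\Pr[A \text{ retains } v_m \mid T_m] = \sum_{r=1}^{k} w_r F(r), \qquad \Pr[A \text{ retains } v_m \mid T'_m] = \sum_{r=1}^{k} w'_r F(r),
\]
where $F(r)$ is the average of $g(\sigma)$ over permutations $\sigma$ with $\sigma(m) = r$, and $w_r := \Pr[X_m(m) = r \mid T_m]$, $w'_r := \Pr[X_m(m) = r \mid T'_m]$.

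The heart of the proof is then the identity $w_r = w'_r$ for every $r \in \{1,\ldots,k\}$. Under $T_m$, exchangeability over the first $m$ coordinates gives $w_r = 1/k$ immediately. Under $T'_m$, one computes $w'_r$ by conditioning further on the rank $R_m$ of $v_m$ in the full sample $(v_1,\ldots,v_n)$ and using the hypergeometric distribution (by exchangeability of positions $\{1,\ldots,n\}\setminus\{m\}$) describing how many of the $R_m - 1$ items exceeding $v_m$ fall into the prefix versus the suffix. The main obstacle---and the only nontrivial step---is verifying that the resulting mixture of hypergeometric distributions, averaged against the uniform law of $R_m$ on $\{1,\ldots,k\}$, collapses back to the uniform distribution $1/k$ on $\{1,\ldots,k\}$. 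I expect this to follow from a bijective counting identity that pairs prefixes witnessing $T_m$ with suitable suffix-extensions witnessing $T'_m$ while preserving the prefix order-type; once that identity is in hand, the tower decomposition above completes the proof.
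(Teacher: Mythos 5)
Your high-level plan is the same as the paper's: order-invariance lets you write both conditional retention probabilities as averages of a single function $g$ of the prefix order-type (the paper's decomposition over the events $E(\sigma)$), and exchangeability of positions $1,\ldots,m-1$ reduces the lemma to comparing the conditional law of the prefix rank $r$ of $v_m$ under $T_m$ versus $T'_m$. The trouble is that the one step you defer---that the mixture of hypergeometrics collapses back to the uniform law on $\{1,\ldots,k\}$---is the entire content of the lemma, and it is not true. Carrying out exactly the computation you set up: conditioned on $T'_m$ the global rank $q$ of $v_m$ is uniform on $\{1,\ldots,k\}$, and given $q$ the prefix rank equals $r$ with probability $\binom{q-1}{r-1}\binom{n-q}{m-r}/\binom{n-1}{m-1}$, so $w'_r\propto\sum_{q=r}^{k}\binom{q-1}{r-1}\binom{n-q}{m-r}$, which is not constant in $r$. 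Concretely, with $k=2$, $m=3$, $n=4$ one gets $w'_1=2/3$ and $w'_2=1/3$ (versus $w_1=w_2=1/2$), and with $k=2$, $m=2$, $n=3$ the prefix order-type $v_2>v_1$ has conditional probability $3/4$ under $T'_2$ versus $1/2$ under $T_2$. The reason is a genuine selection bias: conditioning on $v_m$ surviving into the global top $k$ favors small prefix ranks, because an element of prefix rank $1$ is more likely than one of prefix rank $k$ to still be in the top $k$ after the suffix arrives. So the ``bijective counting identity'' you hope for does not exist, and the plan cannot be completed as written.

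You should also know this is not a defect of your route alone: the paper's own proof justifies the lemma by asserting $\Pr[E(\sigma)\mid T'_m]=\Pr[E(\sigma)\mid T_m]$, with the explicit values $1/m!$ (for $m\le k$) and $1/\bigl(k(m-1)!\bigr)$ (for $m>k$ and admissible $\sigma$), and the same small examples contradict that display. Indeed the equality of the two conditional retention probabilities fails for a general order-invariant rule: the rule ``retain $v_m$ iff it is the prefix maximum,'' with $k=2$, $m=2$, $n=3$, retains $v_m$ with probability $1/2$ conditioned on $T_m$ but $3/4$ conditioned on $T'_m$. So the gap you left open is real and cannot be patched by a cleverer count; any correct treatment has to replace the exact equality of conditional laws by a weaker but true comparison (for instance, a stochastic-domination statement that $T'_m$-conditioning only shifts the prefix rank of $v_m$ toward $1$, together with a re-derivation of the theorem's final accounting from per-rank retention probabilities such as your $F(r)$), rather than the identity $w_r=w'_r$.
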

\begin{proof} 
The decision of $A$ whether to retain $v_m$ depends only on the order-type of~$v_1,\ldots, v_m$.
For each $\sigma\in \mathbb{S}_m$, let $E(\sigma)$ denote the event that the order type of $v_1\ldots v_m$ is $\sigma$.
Thus,
\[
p_m= \Pr[A \text{ retains $v_m$} \mid T_m] = \sum_{\sigma\in \mathbb{S}_m}\Pr[E(\sigma) \mid T_m]\cdot \Pr[A \text{ retains $v_m$ }\mid E(\sigma)],
\]
and similarly
\[
\Pr[A \text{ retains $v_m$} \mid T'_m] = \sum_{\sigma\in \mathbb{S}_m}\Pr[E(\sigma) \mid T'_m]\cdot \Pr[A \text{ retains $v_m$ }\mid E(\sigma)].
\]
Next, observe that for each order-type $\sigma\in \mathbb{S}_m$:
\begin{align*}\Pr[E(\sigma) \mid T_m ] &=  \Pr [E(\sigma) \mid T'_m ]= 
			\begin{cases}
			 \frac{1}{m!}, &m \le k \\
			 \frac{1}{k(m-1)!}, &v_m \in \opt(\{v_{1},\ldots,v_{m}\}), m > k \\
			 0,	         &\text{otherwise}.
			 \end{cases} \qedhere
\end{align*}
\end{proof}

With the above lemma in hand, we can finish the proof.
For the remainder of the argument, 
we condition the probability space on the event that all elements in the sequence $v_1, \ldots, v_n$
are distinct and show that conditioned on this event,
$A$ retains at least $t=\Omega(\log(1/\delta))$ elements in expectation.
Note that this will conclude the proof since by the choice of $\mu$ 
this event occurs with probability $\geq 1-1/n$,
which implies that -- unconditionally -- $A$ retains at least $t - n\cdot(1/n)= t-1 =\Omega(\log(1/\delta))$
elements in expectation.

For each $m\leq n$, $v_m$ is among the top $k$ elements with probability $\min\{1, k/m\}$, 
in which case it is retained with probability $p_m$. So $A$ retains at least 
\[
	\sum_{m=1}^n \min\bigg\{1, \frac{k}{m}\bigg\} \cdot p_m
\] 
elements in expectation. By the above lemma, the probability that $A$ discards the maximum is
\[
\sum_{m=1}^n \frac{k}{n} \cdot (1-p_m),
\] 
which by assumption is smaller than $\delta$.
So we obtain  that $\sum p_m \ge n(1-\delta / k)$.
Thus, to minimize $\sum_{m=1}^n \min\{1,k/m\} \cdot p_m$ subject to 
the constraint that $\sum p_m \ge n(1-\delta / k)$
we make the last $n(1-\delta/k)$ $p_m$'s equal to $1$ and the rest $0$. 
This gives the desired lower bound.
\end{proof}

\subsection{The algorithm from \Cref{thm:informalthreshalg} is optimal}\label{sec:lowerbound}
In the previous section we have presented an algorithm that with probability at least $1-\delta$ outputs an optimal solution
while retaining at most $O(k(\log\log n +  \log d + \log\log(1/\delta)))$ items in expectation during the first phase.

We now present a proof of \cref{thm:informallowerbound}.
We start with the following lemma that shows the dependence on $\delta$ cannot be improved in general, even for $k=1$,
when there are no constraints, and the distribution over the items is known to the algorithm
(so there is no need to train it on a sample from the distribution):
\begin{lemma}\label{lem:lb}
Let $v_1,\ldots,v_n\in[0,1]$ be drawn uniformly and independently, let $e^{-n/2} < \delta < {1/10}$
and let $A$ be an algorithm that retains the maximal value among the $v_i$'s with probability at least $1-\delta$.
Then, 
\[\Ex \bigl[\lvert S\rvert \bigr] = \Omega\left(\log\log\left( \frac{1}{\delta} \right) \right), \]
where $S$ is the set of values retained by the algorithm.
\end{lemma}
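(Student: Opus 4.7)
The plan is to formulate the problem as a linear program over retention probabilities and then extract the $\log\log(1/\delta)$ lower bound from weak duality with a single judicious Lagrange multiplier.

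First I would reduce to a canonical form: WLOG $A$ is deterministic, it retains only items that are running maxima (discarding a non-maximal retained item does not hurt success), and its decision at step $t$ depends only on $v_t$ and the current running maximum $M_{t-1}$. Moreover, conditional on $v_t$ being a new running maximum, the probability that $v_t$ becomes the overall maximum depends only on $v_t$ and $n-t$, so $A$ is fully described by a function $\pi(t,m)\in[0,1]$ giving the retention probability when $v_t=m$ is a new running max. The density of the event ``$v_t=m$ and $v_t$ is a running max'' is $m^{t-1}$ and of ``$v_t=m=M_n$'' is $m^{n-1}$, so
\[
\Ex[|S|] \;=\; \sum_{t=1}^{n}\int_0^1\pi(t,m)\,m^{t-1}\,dm, \qquad \Pr[\text{success}] \;=\; \sum_{t=1}^{n}\int_0^1\pi(t,m)\,m^{n-1}\,dm,
\]
and $\Ex[|S|]$ is bounded below by the value of the LP that minimizes the first sum subject to the second being $\geq 1-\delta$ and $\pi\in[0,1]$.

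For every $\lambda\geq 0$ weak duality yields
\[
\Ex[|S|]\;\geq\;(1-\delta)\lambda \;-\; \sum_{t=1}^{n}\int_0^1\bigl(\lambda m^{n-1}-m^{t-1}\bigr)_+\,dm.
\]
I would take $\lambda = 1/\delta$; the integrand is then positive iff $m\geq\theta_t:=\delta^{1/(n-t)}$, and a direct calculation gives
\[
\int_{\theta_t}^1\Bigl(\tfrac{m^{n-1}}{\delta}-m^{t-1}\Bigr)dm \;=\; \tfrac{1-\delta^{n/(n-t)}}{n\delta} - \tfrac{1-\delta^{t/(n-t)}}{t}.
\]
A little algebra then rewrites the dual bound as $\Ex[|S|]\geq -1 + \tfrac{1}{n}\sum_{t}\delta^{t/(n-t)} + \sum_{t}(1-\delta^{t/(n-t)})/t$.

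The main estimate is therefore $\sum_{t}(1-\delta^{t/(n-t)})/t = \Theta(\log\log(1/\delta))$. Splitting the sum at $t^\star := n/\log(1/\delta)$: for $t\leq t^\star$, expanding the exponential gives $(1-\delta^{t/(n-t)})/t \approx \log(1/\delta)/n$, so this range contributes $O(1)$ in total; for $t > t^\star$, the factor $1-\delta^{t/(n-t)}$ is bounded away from $0$, and $\sum_{t>t^\star}1/t = \log(n/t^\star) = \log\log(1/\delta)$. The hypothesis $e^{-n/2}<\delta<1/10$ ensures $2<t^\star<n$ so this splitting is meaningful and $\log\log(1/\delta)$ is positive, while the stray term $\tfrac{1}{n}\sum_{t}\delta^{t/(n-t)}$ is only $O(1/\log(1/\delta))$ and absorbs harmlessly.

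The main obstacle I foresee is the careful quantitative verification of these two-regime estimates, especially around the transition at $t^\star$, where both the approximation of $\delta^{t/(n-t)}$ by $1-(t/n)\log(1/\delta)$ and its boundedness away from $1$ must be controlled with matching constants. As a backup route I would try a direct combinatorial ``levels'' argument: define $L_j=(1-2^{j+1}/n,\,1-2^j/n]$ for $j=0,\ldots,J$ with $J=\lceil\log_2\log(1/\delta)\rceil$, use $\Pr[M_n\in L_j]\approx e^{-2^j}-e^{-2^{j+1}}$ to conclude that $M_n$ lies in some $L_j$ with $j\leq J$ with probability $1-\delta$, and argue that $A$ must hedge against each level, retaining $\Omega(1)$ running maxima per level in expectation; the subtlety there is to prevent a single retained running maximum from being credited to several levels simultaneously.
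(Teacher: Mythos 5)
Your LP-duality route is correct and genuinely different from the paper's argument. The paper proves the lemma directly: it fixes the threshold $1-\alpha$ with $\alpha=\ln(1/\delta)/2n$, lower-bounds $\Ex[\lvert S\rvert]$ by the number of retained running maxima exceeding $1-\alpha$, shows via a decoupling step (the event ``all later $v_i\le 1-\alpha$'' is independent of ``$A$ rejects $v_t$ and $E_t$'') that the total probability of rejecting such running maxima is at most $\delta(1-\alpha)^{-n}\le\sqrt{\delta}$, and then gets $\sum_t\Pr[E_t]\ge\tfrac14\ln(\alpha n)=\Omega(\log\log(1/\delta))$ from the harmonic-sum argument plus a binomial median bound. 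You instead relax to an LP over the conditional retention probabilities $\pi(t,m)$ of running maxima and apply weak duality with the single multiplier $\lambda=1/\delta$; your densities $m^{t-1}$ and $m^{n-1}$, the dual bound, the algebra giving $-1+\tfrac1n\sum_t\delta^{t/(n-t)}+\sum_t(1-\delta^{t/(n-t)})/t$, and the split at $t^\star=n/\log(1/\delta)$ all check out (the hypotheses indeed give $2<t^\star<n$, and for the lower bound you only need the $t>t^\star$ range, where $1-\delta^{t/(n-t)}\ge 1-e^{-1}$ and $\sum_{t>t^\star}1/t=\Omega(\log\log(1/\delta))$). Two small remarks: the WLOG reductions (determinism, state $=(t,v_t,M_{t-1})$) are stronger than needed and slightly delicate to assert --- it suffices to define $\pi(t,m)$ as the conditional retention probability given that $v_t=m$ is a running maximum and use that the algorithm's decision is independent of the future draws, which already gives $\Ex[\lvert S\rvert]\ge\sum_t\int_0^1\pi(t,m)m^{t-1}dm$ and the success constraint exactly, for arbitrary randomized $A$; and for $\delta$ close to $1/10$ the additive $-1$ can swamp the small $\log\log$ term, but there the trivial bound $\Ex[\lvert S\rvert]\ge 1-\delta$ closes the gap. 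What each approach buys: the paper's proof is shorter and entirely elementary, while yours is more systematic --- the LP captures the exact optimal trade-off for any algorithm, requires no guess of a value threshold, and in principle yields matching constants --- at the price of the integral computations and the asymptotic estimate of $\sum_t(1-\delta^{t/(n-t)})/t$.
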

%
Thus, it follows that for $\delta= \mathsf{poly}(1/n)$ and $k,d = O(1)$
the bound in \cref{thm:informalthreshalg} is tight.

\begin{proof}
Define ${\alpha = \frac{\ln(1/\delta)}{2n}\in(1/n,1/4)}$. Let $E_t$ denote the event that $v_t\geq 1-\alpha$ and is the largest among $v_1,\ldots,v_t$.
We have that
\begin{equation} \label{eq:lower-bound}
\Ex[\lvert S\rvert] \ge \sum_t \Pr[\text{$v_t$ is picked and $E_t$}] =  \sum_t \left( \Pr[E_t]  - \Pr[\text{$v_t$ is rejected and $E_t$}] \right) \ .
\end{equation}
We show that since $A$ errs with probability at most $\delta$ then $\sum_t \Pr[\text{$E_t$  and $v_t$ is rejected}]$ is small.

\begin{align*}
\delta\geq\Pr[~\text{$A$ rejects $v_{max}$}] &\geq  \sum_{t}\Pr[~\text{$A$ rejects $v_t$ and $E_t$ and $v_t=v_{max}$}]\\
                            &= \sum_t\Pr[v_t=v_{max}~~\vert ~~\text{$A$ rejects $v_t$ and $E_t$}]\cdot\Pr[\text{$A$ rejects $v_t$ and $E_t$}]\\
                            &\geq \sum_t\Pr[v_i\leq 1-\alpha \text{ for all }~i> t~~\vert ~~\text{$A$ rejects $v_t$ and $E_t$}]\cdot\Pr[\text{$A$ rejects $v_t$ and $E_t$}]\\
                            &=\sum_t\Pr[v_i\leq 1- \alpha \text{ for all }~i> t]\cdot\Pr[\text{$A$ rejects $v_t$ and $E_t$}]\\
                            &\geq \sum_t (1-\alpha)^{n-t}\cdot\Pr[\text{$A$ rejects $v_t$ and $E_t$}]\\
                            &\geq (1-\alpha)^n\sum_t \Pr[\text{$A$ rejects $v_t$ and $E_t$}].
\end{align*}
The crucial part of the above derivation is in third line.
It replaces the event ``$v_t=v_{max}$'' by the event ``$v_i\leq 1-\alpha \text{ for all }~i> t$''
(which is contained in the event ``$v_t=v_{max}$'' under the above conditioning).
The gain is that the events ``$v_i\leq 1-\alpha \text{ for all }~i> t$''
and ``$\text{$A$ rejects $v_t$ and $E_t$}$'' are independent
(the first depends only on $v_i$ for $i>t$ and the latter on $v_i$ for $i\leq t$).
This justifies the ``$=$'' in the fourth line.

Rearranging, we have
$\sum_t \Pr[\text{$A$ rejects $v_t$ and $E_t$}]  \le \frac{\delta}{(1-\alpha)^n}$.
Substituting this bound in \cref{eq:lower-bound},
\begin{align*}
\Ex[\lvert S\rvert] & \ge  \sum_t \Pr[\text{$v_t$ is picked and $E_t$}] \\
&= \sum_t  \left( \Pr[E_t]  - \Pr[\text{$v_t$ is rejected and $E_t$}] \right) \\
                  &= \sum_t \Pr[E_t]  - \frac{\delta}{(1-\alpha)^n}\\
                   &\geq \frac{1}{4}\ln(\alpha n) - \delta\cdot \exp(2\alpha n) \tag{explained below}\\
                   &= \frac{1}{4}\ln\biggl(\frac{\ln(1/\delta)}{2}\biggr) - \delta\exp(\ln(1/\delta))  \tag{by the definition of $\alpha$}\\
                   &= \frac{1}{4}\ln\ln(1/\delta) - \frac{1}{4}\ln 2 - 1 = \Omega(\log\log(1/\delta)),
\end{align*}
which is what we needed to prove.
The last inequality follows because
\begin{itemize}
\item[(i)] $\sum_t \Pr[E_t] \geq\frac{1}{4}\ln(\alpha n)$ (as is explained next), and
\item[(ii)] $1-\alpha \geq\exp(-2\alpha)$ for every $\alpha\in [0,\frac{1}{4}]$ (which can be verified using basic analysis).
\end{itemize}

To see (i), note that
\[ \sum_t \Pr[E_t] = \Ex\biggl[\sum_t 1_{E_t}\biggr] .\]
Let $z = \lvert\{t : v_t\geq 1-\alpha \}\rvert$.
Since the $v_i$'s are uniform in $[0,1]$ then by the same argument as in the proof of
 \Cref{lem:size} we get that
\[\Ex\left[\sum_t 1_{E_t} \mid z\right] = \sum_{i=1}^{z} \frac{1}{i}\geq \int_1^{z+1} \frac{1}{x} =\ln(z+1) ,\]
and therefore
\[\Ex \left[\sum_t 1_{E_t} \right]  = \Ex_z\Ex\left[\sum_t 1_{E_t} \mid z\right] \ge \Ex_z\left[\ln(z+1)\right] .\]
Let $Z\sim \text{Bin}(n,\alpha)$, and therefore we need to lower bound $\Ex[\ln(Z+1)]$ for $Z\sim \text{Bin}(n,\alpha)$.
To this end, we use the assumption that $\alpha > 1/n$,
and therefore $\Pr[Z \geq \alpha\cdot n]\geq 1/4$ (see \citealp{Greenberg13binomial} for a proof of this basic fact).
In particular, this implies that $\Ex[\ln(Z+1)] \geq \frac{1}{4}\ln(\alpha n + 1) > \frac{1}{4}\ln(\alpha n)$,
which finishes the proof.
\end{proof}

\cref{lem:lb} implies \cref{thm:informallowerbound} as follows:
set $k=d$, $k_{1} = \cdots = k_{d} = 1$ and $n \ge 100k \log(1/\delta)$.
Pick a distribution $D$ which is uniform over items, each satisfying exactly one of $d$ properties,
and with value drawn uniformly from $[0,1]$.

It suffices to show that with probability of at least $1/3$,
the algorithm retains an expected number of $\Omega(\log\log(1/\delta))$ items
from a constant fraction, say $1/4$, of the properties $i$.
This follows from \cref{lem:lb} as we argue next. Let $n_i$ denote the number of observed items of property $i$.
Then, since $\Ex[n_i] =n/d= n/k \geq 100$, the multiplicative Chernoff bound
implies that $n_i \geq n/2k \ge 2 \log(1/\delta)$ with high probability (probability $=1/2$ suffices).
Therefore, the expected number of properties $i$'s for which $n_i\geq 2 \log(1/\delta)$ is at least~$k/2$.
Now, consider the random variable $Y$ which counts for how many properties $i$ we have $n_i\geq 2 \log(1/\delta)$.
Since $Y$ is at most $k$ and $\Ex[Y] \ge k/2$, then
a simple averaging argument implies that with probability of at least $1/3$ we have that
$Y \ge k/4$.
Conditioning on this event (which happens with probability $\geq 1/3$), \cref{lem:lb}
 implies\footnote{Note that to apply \cref{lem:lb}  on $S_i$ we need $\delta > e^{-n_i/2}$, which is equivalent to $n_i > 2\ln(1/\delta)$.} that $\Ex[\lvert S_{i} \rvert] = \Omega(\log\log(1/\delta))$
for each of these $i$'s.

\section*{Acknowledgements}
We thank an anonymous reviewer for their remarks
regarding a previous version of this manuscript.
Their remarks and questions eventually led us to proving \Cref{thm:ramsey}.

\bibliographystyle{abbrvnat}
\bibliography{ref}

\appendix
\section{Deferred Proofs}

\subsection{Generalization and concentration}\label{sec:stabilityproofs}
\label{sec:generalization}

\paragraph{Technical notation.}
For $m\in \mathbb{N}$, the set $\{1,\ldots, m\}$ is denoted by $[m]$.
Given a family of sets $F$ over a domain $X$, and $Y\subseteq X$,
the family $\{f\cap Y: f\in F\}$ is denoted by $F|_{Y}$.
Recall that the VC dimension of $F$ is the maximum size of $Y\subseteq X$
such that $F|_Y$ contains all subsets of $Y$.

\begin{lemma*}[restatement of \cref{lem:vcupperbound}]
$\mbox{VC}({\cal R}) = O(d \log d)~.$
\end{lemma*}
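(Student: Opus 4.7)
The plan is to bound $\mathrm{VC}({\cal R})$ by a direct counting argument on the projection of ${\cal R}$ onto a finite sample. Fix any $Y = \{c_1,\ldots,c_m\} \subseteq X$. The key observation is that a threshold-policy $T=(t_1,\ldots,t_d)$ acts on $Y$ only through the relative position of each $t_i$ among the finite set of values $\{v_i(c) : c \in P_i \cap Y\}$: for every fixed $i$, as $t_i$ ranges over $\mathbb{R}$, at most $m+1$ distinct subsets of the form $R_i^T \cap Y = \{c\in P_i\cap Y : v_i(c)\ge t_i\}$ can arise (one for each interval determined by the at most $m$ values of $v_i$ on $P_i \cap Y$).

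Next I would combine the coordinate-wise bounds. Since $R^T\cap Y = \bigcup_{i=1}^d (R_i^T \cap Y)$, the number of distinct $d$-tuples $\bigl(R_1^T\cap Y,\ldots,R_d^T\cap Y\bigr)$ realizable by some $T\in\T$ is at most $(m+1)^d$, and therefore
\[
    \bigl\lvert \{R^T \cap Y : T\in\T\} \bigr\rvert \le (m+1)^d.
\]
If $Y$ is shattered by ${\cal R}$ then the left-hand side equals $2^m$, yielding $2^m \le (m+1)^d$, hence $m \le d\log_2(m+1)$. A standard manipulation (e.g.\ $m \le d\log_2(m+1) \le d(1+\log_2 m)$) forces $m = O(d\log d)$, which is the claimed bound on $\mathrm{VC}({\cal R})$.

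There is no real technical obstacle here; the argument is essentially Sauer--Shelah-style counting tailored to the product structure of threshold-policies. The only subtlety worth stating carefully is the first step, noting that the retained subset of $Y$ is determined purely by how each $t_i$ compares with the values $v_i(c)$ for $c\in P_i\cap Y$, so that each coordinate contributes only $m+1$ possibilities independently of the other coordinates.
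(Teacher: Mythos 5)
Your proof is correct and follows essentially the same route as the paper: bound $\lvert {\cal R}|_Y \rvert$ by $(m+1)^d$ via the observation that each threshold $t_i$ produces at most $m+1$ distinct intersections with $Y$, then invert $2^m \le (m+1)^d$ to conclude $m = O(d\log d)$. No gaps.
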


\begin{proof}
Let $S$ be a set of items shattered by~${\cal R}$  and denote its size by $m$;
since $S$ is arbitrary, an upper bound on~$m$ implies an upper bound on $\mbox{VC}({\cal R})$.
To this end we upper bound the number of subsets in~${\cal R}|_S = \{S\cap R_{ T} : R_{ T}\in{\cal R}\}$.
Now, there are $m$ items in $S$ with at most $m$ different values. Therefore, we can restrict our attention to thresholds-policies where each threshold is picked from a fixed set of $m+1$ meaningful locations (one location in between values of two consecutive items when we sort the items by value).
Thus $\lvert {\cal R}|_S \rvert \le (m+1)^{d}$, but, as $S$ is shattered, $\lvert {\cal R}|_S \rvert = 2^{m}$ and we get $m \le d \log_{2} (m+1)$. This implies $m = O(d \log d)$ from which we conclude that~$\mbox{VC}({\cal R}) = O(d \log d)$.
\end{proof}
\begin{lemma*}[restatement of \cref{lem:vc2}]
$\mbox{VC}({\cal Q}) = O(\log d)~.$
\end{lemma*}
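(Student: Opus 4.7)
The plan is to imitate the proof of \cref{lem:vcupperbound}, but exploit the fact that each set in $\mathcal{Q}$ is parametrized by only \emph{one} threshold (together with a choice of constraint index $i\in[d]$) rather than by the full vector $T\in\mathcal{T}$. This drops the dependence on $d$ from exponential to linear, which in turn shifts the final bound from $O(d\log d)$ down to $O(\log d)$.

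Concretely, let $S$ be a finite subset of items shattered by $\mathcal{Q}$ and write $m=\lvert S\rvert$. It suffices to produce an upper bound on $\lvert \mathcal{Q}|_S\rvert$. For each fixed constraint index $i\in[d]$, the set $R_i^{T}(C)\cap S$ depends only on the threshold $t_i$, and sweeping $t_i$ across $\mathbb{R}$ produces at most $m+1$ distinct subsets of $S$ (the relevant thresholds are those lying strictly between consecutive $v_i$-values of items in $S\cap P_i$, together with a threshold below all of them). Summing over the $d$ possible choices of $i$ therefore gives
\[
\lvert \mathcal{Q}|_S \rvert \;\leq\; d(m+1).
\]

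Since $S$ is shattered, $\lvert \mathcal{Q}|_S\rvert = 2^m$, whence $2^m \leq d(m+1)$. Taking logarithms yields $m \leq \log_2 d + \log_2(m+1)$, which forces $m = O(\log d)$. As $S$ was an arbitrary shattered set, this establishes $\mbox{VC}(\mathcal{Q}) = O(\log d)$, as desired.

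There is no real obstacle here; the argument is a one-line counting estimate followed by the standard ``Sauer-style'' inversion $2^m \leq \text{poly}(m)\cdot d \Rightarrow m = O(\log d)$. The only care needed is to note that, although the family $\mathcal{Q}$ nominally ranges over all of $\mathcal{T}$, the restriction of $R_i^T$ to $S$ is insensitive to the coordinates $t_j$ with $j\neq i$, so the effective parameter space per index $i$ is one-dimensional.
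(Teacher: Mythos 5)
Your proof is correct and takes essentially the same approach as the paper: decompose $\mathcal{Q}$ into the $d$ one-parameter subfamilies indexed by $i$, bound the number of restrictions of each to $S$ by $m+1$, and invert $2^m \le d(m+1)$. The only cosmetic difference is that you inline the per-index counting directly, whereas the paper phrases it as ``$\mathrm{VC}(\mathcal{Q}_i)=1$'' plus Sauer's lemma and a separate union lemma.
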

\begin{proof}
For $i\leq d$, let ${\cal Q}_i = \{R^{ T}_{i} :  T\in \T\}$.
Note that ${\cal Q} = \cup_i {\cal Q}_{i}$.
We claim that $\mbox{VC}({\cal Q}_{i}) = 1$ for all $i$.
Indeed, let $c',c''$ be two items.
Note that if $c'\notin P_i$ or $c''\notin P_i$ then $\{c',c''\}$ is not contained by ${\cal Q}_i$
and therefore not shattered by it.
Therefore, assume that $c',c''\in P_i$ and $v_{i}(c')\geq v_{i}(c'')$.
Now, it follows that any threshold $ T$ that retains $c''$ must also retain $c'$,
and so it follows that also in this case~$\{c',c''\}$ is not shattered.

The bound on the VC dimension of ${\cal Q} = \cup_{i\leq d}{\cal Q}_{i}$
follows from the next lemma.
\begin{lemma}\label{lem:union}
Let $m\geq 2$ and let $F_1,\ldots, F_m$ be classes with VC dimension at most $1$.
Then, the VC dimension of~$\cup_i F_i$ is at most~$10\log m$.
\end{lemma}
\begin{proof}
We show that $\cup_i F_i$ does not shatter a set of size $10\log m$.
Let $Y\subseteq X$ of size $10\log m$.
Indeed, by the Sauer's Lemma~\citep{Sauer72}:
\[\bigl\lvert (\cup_i F_i) |_Y \bigr\rvert  \leq m\left({10\log m \choose  0} + {10\log m \choose  1}  \right) = m(1+10\log m) < m^{10} = 2^{10\log m},\]
and therefore, $Y$ is not shattered by $\cup_i F_i$.
\end{proof}

This finishes the proof of \Cref{lem:vc2}. 
\end{proof}

\begin{lemma*}[restatement of \cref{lem:packing}]
There exists a collection of ${\cal N}$ thresholds-policies such that $\lvert {\cal N}\rvert \leq k^{O(d)}$,
and for every thresholds-policy $ T\in \T$ there are ${ T}^+,{ T}^-\in {\cal N}$ such that
\begin{enumerate}
\item  $V^{ T^-}(C)\leq V^{ T}(C)\leq V^{ T^+}(C)$ for every sample of items $C$.
(By taking expectations this also implies that $ \nu^{ T^-}\leq \nu^{ T}\leq  \nu^{ T^+}$.)
\item  $\nu^{ T^+} - \nu^{ T^-} \leq{10}$.
\end{enumerate}
\end{lemma*}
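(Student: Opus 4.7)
My plan is to construct $\cal N$ as a discretization of the threshold cube, chosen so that the induced values bracket every $T \in \T$ to within an additive constant. For each coordinate $i$, I will define a set of threshold levels based on the expected retention $\rho^T_i = \Ex[|R^T_i(C)|]$ (which depends only on $t_i$ and, by the atomlessness assumption on $D$, is continuous and monotone non-increasing in $t_i$). In the ``fine'' range $\rho_i \in [0, O(k)]$ I place levels in an arithmetic progression with spacing $\Theta(1/d)$, and in the ``coarse'' range $\rho_i \in (O(k), \rho_i^{\max}]$ I place only a logarithmic set of levels. This yields $\mathrm{poly}(k)$ levels per coordinate, so taking the product gives $|{\cal N}| \leq k^{O(d)}$.

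Given any $T = (t_1,\ldots,t_d) \in \T$, I pick $T^+_i$ as the largest net level $\leq t_i$ and $T^-_i$ as the smallest net level $\geq t_i$, coordinate-wise. Item 1 of the lemma then follows immediately from the fact that $V^T(C)$ is coordinate-wise monotone non-increasing in $T$: lowering any threshold can only enlarge $R^T(C)$, and hence only weakly increase the maximum-weight matching. Taking expectations yields the analogous inequality for $\nu$.

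For Item 2, I will bound $\Ex[V^{T^+}(C) - V^{T^-}(C)]$ by a coordinate-by-coordinate hybrid argument. Let $T^{(0)} = T^-$ and iteratively define $T^{(i)}$ by replacing coordinate $i$ of $T^{(i-1)}$ with $T^+_i$, so $T^{(d)} = T^+$. Telescoping gives $V^{T^+}(C) - V^{T^-}(C) = \sum_{i=1}^d \bigl(V^{T^{(i)}}(C) - V^{T^{(i-1)}}(C)\bigr)$ pointwise, and each summand is bounded pointwise by $|R^{T^{(i)}}_i(C) \setminus R^{T^{(i-1)}}_i(C)|$ (since any newly matched item contributes at most $1$ to the optimum and the matching has at most $k$ edges). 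In the fine regime this expectation equals $\rho_i(T^+_i) - \rho_i(T^-_i) \leq \Theta(1/d)$, so the fine-regime coordinates contribute a total of $O(1)$ to $\nu^{T^+} - \nu^{T^-}$.

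The hard part will be the coarse regime, where the per-coordinate gap $\rho_i(T^+_i) - \rho_i(T^-_i)$ can be as large as $n$ yet I still need a constant bound on $\Ex[V^{T^{(i)}} - V^{T^{(i-1)}}]$. My plan is to rely on concentration: choosing the boundary between fine and coarse as $Ck$ for a sufficiently large constant $C$, a Chernoff bound gives $\Pr[|R^{T^-}_i(C)| < k] \leq e^{-\Omega(k)}$, and on the complementary event the top-$k$ items of $P_i \cap C$ by $v_i$ are all retained by $T^-$. A matching-swap argument along alternating paths (as in the proof of Lemma~\ref{lem:opti-optn}) then shows that any improvement $V^{T^{(i)}} - V^{T^{(i-1)}}$ on this high-probability event can only come from new items (with $v_i < T^-_i$) filling slots that $T^{(i-1)}$'s matching would otherwise leave as dummies, whose number and per-item contribution can be controlled. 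Combined with the trivial bound $V^{T^+} - V^{T^-} \leq k$ on the low-probability event, the expected contribution per coarse coordinate is $k \cdot e^{-\Omega(k)} = O(1)$. Tuning the constants so that the sum of fine and coarse contributions is at most $10$ completes the proof.
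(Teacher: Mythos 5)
Your net, your rounding, Item~1, and the fine-regime accounting are all fine and essentially parallel the paper (your $\Theta(1/d)$ spacing in expected retention is the paper's $1/(dn)$ spacing in probability mass). The genuine gap is the coarse regime, and it is not a matter of tuning constants: the claim that on the event $\lvert R^{T^-}_i(C)\rvert \ge k$ a coarse hybrid step contributes essentially nothing is false. The swap argument only shows that a newly retained item (one with $v_i\in[T^+_i,T^-_i)$) will not be \emph{assigned to property $i$}; it does not prevent it from being assigned to a \emph{different} property $j$ it happens to possess, where it can displace an item of much smaller $v_j$ (not only a dummy) and raise the optimum by $\Omega(1)$, for up to $k$ slots. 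Concretely: take $d=2$, $k_1=1$, $k_2=k-1$; let a $5k/n$ fraction of items lie in $P_1$ only with $v_1\in[0.8,1]$, and let all remaining items lie in $P_1\cap P_2$ with $v_1$ uniform on $[0,1/2]$ and $v_2=0.9-v_1$ (anti-correlated); take $t_1$ deep in the coarse regime (retention probability about $0.2$) and $t_2=0.95$, above every realized $v_2$. Then $T^-$ retains, besides the $P_1$-only items, only $P_1\cap P_2$ items whose $v_2$ is near the bottom of its range, while the next coarse level down floods in items whose $v_2$ is larger by a constant; so a single coarse step increases the expected optimum by $\Theta(k)$ \emph{on the high-probability event} --- not $k\,e^{-\Omega(k)}$. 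The per-step bound $V^{T^{(i)}}-V^{T^{(i-1)}}\le\lvert R^{T^{(i)}}_i(C)\setminus R^{T^{(i-1)}}_i(C)\rvert$ is hopelessly lossy for coarse coordinates, and the sentence ``whose number and per-item contribution can be controlled'' is exactly the unproved crux. (Two smaller points: an $O(1)$ bound per coarse coordinate would in any case sum to $O(d)$, not $10$; and logarithmically many coarse levels per coordinate give $\lvert{\cal N}\rvert\le\bigl(O(dk)+O(\log n)\bigr)^d$, which is $k^{O(d)}$ only if $\log n\le k^{O(1)}$.)

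For comparison, the paper does not telescope coordinate-by-coordinate: it bounds $\nu^{T^+}-\nu^{T^-}$ by the expected number of items in a canonical optimal solution for $T^+$ that $T^-$ fails to retain, and splits those items according to the property they are \emph{assigned} to --- assignments to heavy properties are excluded with high probability by a Chernoff-plus-swap argument (each heavy property retains at least $k$ items under $T^-$), while assignments to light properties are charged to grid cells of mass $1/(dn)$. Note, however, that the cross-property phenomenon in the example above is the delicate point of that argument too: an item retained only via a heavy coordinate can still be assigned to a light property, which is precisely where the containment $E\subseteq\cup_i G_i$ in the paper's proof is strained. So this cross-assignment effect is genuinely the heart of the lemma; any complete write-up must treat it explicitly, not by assertion, and simply grafting your hybrid decomposition onto the paper's net will not do so.
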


\begin{proof}
For every $i\leq d$ and $j\leq dn$ define thresholds $t_i^j\in [0,1]$ where $t_i^0=1$ and for $j>0$
set $t_i^j$ to satisfy\footnote{Such $t_i^j$'s exist due to our assumption that $D$ is atomless (see~\Cref{sec:thresholds}).}
\[\Pr_{c\sim D}[v(c) \geq t_i^j \text{ and } c\in P_i ] = \frac{j}{dn}.\]
Note that $t_i^0 > t_i^1 >\ldots$ (see \Cref{fig:tij}). 
Set
\[
{\cal J}_{i} = \biggl\{j : 0 \le \frac{j}{dn} \le \Pr_{c \sim D} [c \in P_{i}] ,j\in \mathbb{N}\biggr\},
\]
and define
\begin{align*}
{\cal N}_i &= \bigl\{t_{i}^{j} \, \mid \, j\in {\cal J}_{i} \cap \{0,1,\ldots,10dk\} \bigr\}\cup\{0\}\\
{\cal N} &= {\cal N}_1 \times {\cal N}_2 \ldots \times {\cal N}_d.
\end{align*}
Note that indeed $\lvert {\cal N}\rvert \leq (10dk +2) ^{{d+1}}  = k^{O(d)}$.

We next show that ${\cal N}$ satisfies items 1 and 2 in the statement of the lemma.
Let $ T\in \T$ be an arbitrary thresholds-policy.
The policies $ T^-=(t_i^-)_{i\leq d}$, and $ T^+ = (t_i^+)_{i\leq d}$
are derived by rounding $t$ in each coordinate
up and down respectively, to the closest policies in ${\cal N}$
(so, the thresholds in $T^+$ are smaller than in $T^-$;
the~``$+$'' sign reflects that it retains
more items and achieves a higher value).
Formally,
$t_i^+ = \max\{t\in {\cal N}_i : t \leq t_i\}$ and
$t_i^- = \min\{t\in {\cal N}_i : t \geq t_i\}$ where $t_i$ is the threshold for property $i$ in $T$.
%
Therefore, for every sample $ C\sim D^n$,
the set of items in $ C$ that are retained by~$ T$
contains the set retained by $ T^-$
and is contained in the set retained by $ T^+$.
This implies item 1.

To derive item 2, observe that for every sample $ C$:
$V^{ T^+}(C) - V^{ T^-}(C) \leq \lvert Z\rvert$,
where $Z\subseteq C$ denotes the set of items which participate in some {canonical} optimal solution for $ T^{+}$
that are not retained by $ T^{-}$.
Thus, it suffices to show that $\Ex[\lvert Z\rvert] \leq {10}$.
To this end put $p_i = \Pr_{c\sim D}[v(c) \geq t_i \text{ and } c\in P_i ]$ and partition $Z$ into two disjoint sets~$Z=E\cup F$, where
$E$ is the set of all items $ c_{j}\in Z$ that are assigned by the optimal solution of $ T^{+}$
to a property  $P_{i}$ where~$p_i < \frac{10k}{n}$, and
$F=Z\setminus E$.
We claim that
\begin{itemize}
\item $\Ex[\lvert E\rvert] \leq 1$:
for each
$P_i$ such that $p_i < \frac{10k}{n}$ let
$G_i\subseteq P_i$ denote the set of items whose value $v\in [t_i^+, t_i^-)$
(i.e.\ retained by $T^+$ and not by $T^-$ ). Note that $E\subseteq \cup_i G_i$,
and that $\Pr_{c\sim D}[c\in G_i] \leq \frac{1}{dn}$. Thus, it follows that
\[\Ex_{C\sim D^n}[\lvert E\rvert] \leq \Ex_{C\sim D^n}[\lvert \cup_i G_i\rvert]\leq \sum_i\Ex_{C\sim D^n}[\lvert G_i\rvert] \leq d\cdot\frac{n}{dn} \leq 1.\]
\item $\Ex[\lvert F\rvert ]\leq {9}$:
note that $\Ex[\lvert F\rvert ] \leq k\cdot\Pr[\lvert F\rvert>0]$ (because $F\subseteq Z$ and $\lvert Z\rvert \leq k$).
Thus, it suffices to show that~$\Pr[F>0]\leq \frac{9}{k}$.
Indeed,~$F\neq \emptyset$ only if there is a property $P_i$ with $p_i \geq \frac{10k}{n}$
such that less than~$k$ items {from $P_i$ are retained by $T^-$.
Fix a property $P_i$ such that $p_i \geq\frac{10k}{n}$ and let $p_i^- =  \Pr_{c\sim D}[v(c) \geq t_i^- \text{ and } c\in P_i ]$.
Since $p_i^-\geq \frac{10k}{n}$,
a multiplicative Chernoff bound yields that
\[ \Pr_{C\sim D^n}[\text{less than $k$ items from $P_i$ are retained by $T^-$}]\leq \exp\Bigl(-\frac{(9/10)^2}{2}10k\Bigr) \leq \frac{9}{k^2}
\leq\frac{9}{dk},\]}
and a union bound over all such properties $P_i$ implies that $\Pr[\lvert F\rvert >0]\leq \frac{9d}{dk}\le \frac{9}{k}$.
\end{itemize}
Thus, it follows that $v^{ T^+} - v^{ T^-} \leq 1 + k\cdot\frac{9}{k} = 10$, which finishes the proof.

\end{proof}

\begin{figure}
\includegraphics[scale=0.5]{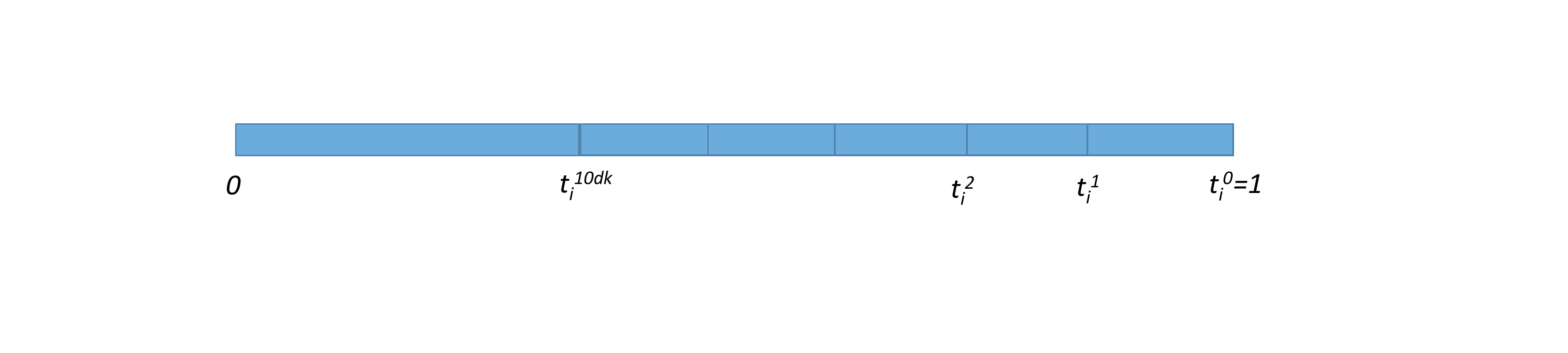}
\caption{An illustration of the thresholds in ${\cal N}_i$ as defined in the proof of \Cref{lem:packing}.
Each $t_i^j$ for $j\in {\cal J}_i$ satisfies $\Pr_{c \sim D}[v(c) \geq t_i^j \text{ and }c\in P_i ] = \frac{j}{dn}$.}\label{fig:tij}
\end{figure}

\end{document}